\title{Minimizing FLOPs to Learn Efficient Sparse Representations}
\author{Biswajit Paria$^\dagger$\thanks{Part of the work was done when BP was a research intern at Snap Inc.}~~,~~ Chih-Kuan Yeh$^\dagger$,~~ Ian E.H. Yen$^\ddagger$,~~ Ning Xu$^\mathsection$,~~ Pradeep Ravikumar$^\dagger$,\\
\textbf{Barnab\'as P\'oczos}$^\dagger$\\
$^\dagger$Carnegie Mellon University,~~ $^\ddagger$Moffett AI,~~ $^\mathsection$Amazon\\
{\asciifamily\small\{bparia,cjyeh,pradeepr,bapoczos\}@cs.cmu.edu, ian.yan@moffett.ai, ningxu01@gmail.com}
}
\begin{document}

\maketitle

\begin{abstract}
Deep representation learning has become one of the most widely adopted approaches for visual search, recommendation, and identification. Retrieval of such  representations from a large database is however computationally challenging. Approximate methods based on learning compact representations, have been widely explored for this problem, such as \emph{locality sensitive hashing}, \emph{product quantization}, and \emph{PCA}. In this work, in contrast to learning compact representations, we propose to learn high dimensional and sparse representations that have similar representational capacity as dense embeddings while being more efficient due to sparse matrix multiplication operations which can be much faster than dense multiplication. Following the key insight that the number of operations decreases quadratically with the sparsity of embeddings provided the non-zero entries are distributed uniformly across dimensions, we propose a novel approach to learn such distributed sparse embeddings via the use of a carefully constructed regularization function that directly minimizes a continuous relaxation of the number of floating-point operations (FLOPs) incurred during retrieval. Our experiments show that our approach is competitive to the other baselines and yields a similar or better speed-vs-accuracy tradeoff on practical datasets\footnote{The implementation is available at \url{https://github.com/biswajitsc/sparse-embed}}.
\end{abstract}

\section{Introduction}

%Applications of Retrieval based on Embeddings and Challenges posed by retrieval
Learning semantic representations using deep neural networks (DNN) is now a fundamental facet of applications ranging from visual search \citep{jing2015visual,hadi2015buy}, semantic text matching \citep{neculoiu2016learning}, oneshot classification \citep{koch2015siamese}, clustering \citep{oh2017deep}, and recommendation \citep{shankar2017deep}. The high-dimensional dense embeddings generated from DNNs however pose a computational challenge for performing nearest neighbor search in large-scale problems with millions of instances. In particular, when the embedding dimension is high, evaluating the distance of any query to all the instances in a large database is expensive, so that efficient search without sacrificing accuracy is difficult. Representations generated using DNNs typically have a higher dimension compared to hand-crafted features such as SIFT \citep{lowe2004distinctive}, and moreover are dense. The key caveat with dense features is that unlike bag-of-words features they cannot be efficiently searched through an inverted index, without approximations.

%Acceleration --- ANN
Since accurate search in high dimensions is prohibitively expensive in practice \citep{wang2011fast}, one has to typically sacrifice accuracy for efficiency by resorting to approximate methods. Addressing the problem of efficient approximate \emph{Nearest-Neighbor Search (NNS)} \citep{jegou2011product} or \emph{Maximum Inner-Product Search (MIPS)} \citep{shrivastava2014asymmetric} is thus an active area of research, which we review in brief in the related work section. Most approaches \citep{charikar2002similarity, jegou2011product} aim to learn compact lower-dimensional representations that preserve distance information.

While there has been ample work on learning compact representations, learning sparse higher dimensional representations have been addressed only recently~\citep{jeong2018efficient, cao2018deep}. As a seminal instance, \citet{jeong2018efficient} propose an end-to-end approach to learn sparse and high-dimensional hashes, showing significant speed-up in retrieval time on benchmark datasets compared to dense embeddings. This approach has also been motivated from a biological viewpoint \citep{li2018fast} by relating to a fruit fly's olfactory circuit, thus suggesting the possibility of hashing using higher dimensions instead of reducing the dimensionality. Furthermore, as suggested by \citet{glorot2011deep}, sparsity can have additional advantages of linear separability and information disentanglement.

%Approaches for acceleration --- End-to-end Learning Approach
% End-to-end approaches can conceivably learn better hash codes by optimizing through the whole pipeline, learning efficient hash codes while preserving the semantic similarity of the sample instances.
%However, the hashes being discrete in nature, have limited representation power, and additionally being non-continuous, are difficult to train. This is particularly exacerbated in problems with a huge number of classes such as Face Identification \citep{schroff2015facenet}, where hundred of thousands of identities should ideally be far away from each other in the embedded space, requiring rich representational power.

In a similar vein, in this work, we propose to learn high dimensional embeddings that are sparse and hence efficient to retrieve using sparse matrix multiplication operations. In contrast to compact lower-dimensional ANN-esque representations that typically lead to decreased representational power, a key facet of our higher dimensional sparse embeddings is that they can have the same representational capacity as the initial dense embeddings.  The core idea behind our approach is inspired by two key observations: (i) retrieval of $d$ (high) dimensional sparse embeddings with fraction $p$ of non-zero values on an average, can be sped up by a factor of $1/p$. (ii) The speed up can be further improved to a factor of $1/p^2$ by ensuring that the non-zero values are evenly distributed across all the dimensions. This indicates that sparsity alone is not sufficient to ensure maximal speedup; the distribution of the non-zero values plays a significant role as well. This motivates us to consider the effect of sparsity on the number of \emph{floating point operations (FLOPs)} required for retrieval with an inverted index. We propose a penalty function on the embedding vectors that is a continuous relaxation of the exact number of FLOPs, and encourages an even distribution of the non-zeros across the dimensions.

We apply our approach to the large scale metric learning problem of learning embeddings for facial images. Our training loss consists of a \emph{metric learning}~\citep{weinberger2009distance} loss aimed at learning embeddings that mimic a desired metric, and a \emph{FLOPs loss} to minimize the number of operations. We perform an empirical evaluation of our approach on the Megaface dataset \citep{kemelmacher2016megaface}, and show that our proposed method successfully learns high-dimensional sparse embeddings that are orders-of-magnitude faster. We compare our approach to multiple baselines demonstrating an improved or similar speed-vs-accuracy trade-off.

The rest of the paper is organized as follows. In Section~\ref{sec:expected_flops} we analyze the expected number of FLOPs, for which we derive an exact expression. In Section~\ref{sec:our_approach} we derive a continuous relaxation that can be used as a regularizer, and optimized using gradient descent. We also provide some analytical justifications for our relaxation. In Section~\ref{sec:experiments} we then compare our method on a large metric learning task showing an improved speed-accuracy trade-off compared to the baselines.

\section{Related Work}

\paragraph{Learning compact representations, ANN.} Exact retrieval of the top-k nearest neighbours is expensive in practice for high-dimensional dense embeddings learned from deep neural networks, with practitioners often resorting to \emph{approximate nearest neighbours} (ANN) for efficient retrieval. Popular approaches for ANN include \emph{Locality sensitive hashing} (LSH) \citep{gionis1999similarity, andoni2015practical, raginsky2009locality} relying on random projections, \emph{Navigable small world graphs} (NSW) \citep{malkov2014approximate} and hierarchical NSW (HNSW) \citep{malkov2018efficient} based on constructing efficient search graphs by finding clusters in the data, \emph{Product Quantization} (PQ) \citep{ge2013optimized,jegou2011product} approaches which decompose the original space into a cartesian product of low-dimensional subspaces and quantize each of them separately, and \emph{Spectral hashing} \citep{weiss2009spectral} which involves an NP hard problem of computing an optimal binary hash, which is relaxed to continuous valued hashes, admitting a simple solution in terms of the spectrum of the similarity matrix. Overall, for compact representations and to speed up query times, most of these approaches use a variety of carefully chosen data structures, such as hashes \citep{neyshabur2015symmetric,wang2018survey}, locality sensitive hashes \citep{andoni2015practical}, inverted file structure \citep{jegou2011product, baranchuk2018revisiting}, trees \citep{ram2012maximum}, clustering \citep{auvolat2015clustering}, quantization sketches \citep{jegou2011product, ning2016scalable}, as well as dimensionality reductions based on principal component analysis and t-SNE~\citep{maaten2008visualizing}.

\paragraph{End to end ANN.} Learning the ANN structure end-to-end is another thread of work that has gained popularity recently. \citet{norouzi2012hamming} propose to learn binary representations for the Hamming metric by minimizing a margin based triplet loss. \citet{erin2015deep} use the signed output of a deep neural network as hashes, while imposing independence and orthogonality conditions on the hash bits. Other end-to-end learning approaches for learning hashes include \citep{cao2016deep, li2017deep}. An advantage of end-to-end methods is that they learn hash codes that are optimally compatible to the feature representations.

\paragraph{Sparse representations.}
Sparse representations have been previously studied from various viewpoints. \citet{glorot2011deep} explore sparse neural networks in modeling biological neural networks and show improved performance, along with additional advantages such as better linear separability and information disentangling. \citet{ranzato2008sparse, ranzato2007efficient, lee2008sparse} propose learning sparse features using deep belief networks. \citet{olshausen1997sparse} explore sparse coding with an overcomplete basis, from a neurobiological viewpoint. Sparsity in auto-encoders have been explored by \citet{ng2011sparse, kavukcuoglu2010fast}. \citet{arpit2015regularized} provide sufficient conditions to learn sparse representations, and also further provide an excellent review of sparse autoencoders. Dropout \citep{srivastava2014dropout} and a number of its variants \citep{molchanov2017variational, park2018adversarial, ba2013adaptive} have also been shown to impose sparsity in neural networks.

\paragraph{High-dimensional sparse representations.}
\emph{Sparse deep hashing (SDH)} \citep{jeong2018efficient} is an end-to-end approach that involves starting with a pre-trained network and then performing alternate minimization consisting of two minimization steps, one for training the binary hashes and the other for training the continuous dense embeddings. The first involves computing an optimal hash best compatible with the dense embedding using a \emph{min-cost-max-flow} approach. The second step is a gradient descent step to learn a dense embedding by minimizing a metric learning loss. A related approach, $k$-sparse autoencoders \citep{makhzani2013k} learn representations in an unsupervised manner with at most $k$ non-zero activations. The idea of high dimensional sparse embeddings is also reinforced by the \emph{sparse-lifting} approach \citep{li2018fast} where sparse high dimensional embeddings are learned from dense features. The idea is motivated by the biologically inspired \emph{fly} algorithm~\citep{dasgupta2017neural}. Experimental results indicated that \emph{sparse-lifting} is an improvement both in terms of precision and speed, when compared to traditional techniques like LSH that rely on dimensionality reduction.

\paragraph{$\ell_1$ regularization, lasso.} The \emph{Lasso} \citep{tibshirani1996regression} is the most popular approach to impose sparsity and has been used in a variety of applications including sparsifying and compressing neural networks \citep{liu2015sparse, wen2016learning}. The \emph{group lasso} \citep{meier2008group} is an extension of lasso that encourages all features in a specified group to be selected together. Another extension, the \emph{exclusive lasso} \citep{kong2014exclusive, zhou2010exclusive}, on the other hand, is designed to select a single feature in a group. Our proposed regularizer, originally motivated by idea of minimizing FLOPs closely resembles exclusive lasso. Our focus however is on sparsifying the produced embeddings rather than sparsifying the parameters.

\paragraph{Sparse matrix vector product (SpMV).} Existing work for SpMV computations include \citep{haffner2006fast, kudo2003fast}, proposing algorithms based on inverted indices. Inverted indices are however known to suffer from severe cache misses. Linear algebra back-ends such as BLAS~\citep{blackford2002updated} rely on efficient cache accesses to achieve speedup. \citet{haffner2006fast, mellor2004optimizing, krotkiewski2010parallel} propose cache efficient algorithms for sparse matrix vector products. There has also been substantial interest in speeding up SpMV computations using specialized hardware such as GPUs \citep{vazquez2010improving, vazquez2011new}, FPGAs \citep{zhuo2005sparse, zhang2009fpga}, and custom hardware \citep{prasanna2007sparse}.

\paragraph{Metric learning.} While there exist many settings for learning embeddings \citep{hinton2006reducing, kingma2013auto, kiela2014learning} in this paper we restrict our attention to the context of metric learning \citep{weinberger2009distance}. Some examples of metric learning losses include large margin softmax loss for CNNs \citep{liu2016large}, triplet loss \citep{schroff2015facenet}, and proxy based metric loss \citep{movshovitz2017no}. 
% A different approach by \citet{oh2017deep} reduces the metric learning problem to the facility location problem.

\section{Expected number of FLOPs}
\label{sec:expected_flops}
In this section we study the effect of sparsity on the expected number of FLOPs required for retrieval and derive an exact expression for the expected number of FLOPs. The main idea in this paper is based on the key insight that if each of the dimensions of the embedding are non-zero with a probability $p$ (not necessarily independently), then it is possible to achieve a speedup up to an order of $1/p^2$ using an inverted index on the set of embeddings. Consider two embedding vectors $\uv, \vv$. Computing $\uv^T \vv$ requires computing only the pointwise product at the indices $k$ where both $\uv_k$ and $\vv_k$ are non-zero. This is the main motivation behind using inverted indices and leads to the aforementioned speedup. Before we analyze it more formally, we introduce some notation.

Let $\Dc = \{(\xv_i, y_i)\}_{i=1}^n$ be a set of $n$ independent training samples drawn from $\Zc = \Xc \times \Yc$ according to a distribution $\Pc$, where $\Xc, \Yc$ denote the input and label spaces respectively. Let $\mathscr{F} = \{f_\theta:\Xc \to \Rb^d\ |\ \theta \in \Theta\}$ be a class of functions parameterized by $\theta \in \Theta$, mapping input instances to $d$-dimensional embeddings. Typically, for image tasks, the function is chosen to be a suitable CNN~\citep{krizhevsky2012imagenet}. Suppose $X, Y \sim \Pc$, then define the activation probability $p_j = \Pb(f_\theta(X)_j \neq 0)$, and its empirical version $\Bar{p}_j = \frac{1}{n} \sum_{i=1}^n \Ib[f_\theta(\xv_i)_j \neq 0]$.

We now show that sparse embeddings can lead to a quadratic speedup. Consider a $d$-dimensional sparse query vector $\uv_q = f_\theta(\xv_q) \in \Rb^d$ and a database of $n$ sparse vectors $\{\vv_i = f_\theta(\xv^{(i)})\}_{i=1}^n \subset \Rb^d$ forming a matrix $\Dv \in \Rb^{n \times d}$. We assume that $\xv_q, \xv^{(i)}\ (i=1, \dots, n)$ are sampled independently from $\Pc$. Computing the vector matrix product $\Dv \uv_q$ requires looking at only the columns of $\Dv$ corresponding to the non-zero entries of $\uv_q$ given by $N_q = \{j\ |\ 1\le j\le d, (\uv_q)_j \neq 0\}$. Furthermore, in each of those columns we only need to look at the non-zero entries. This can be implemented efficiently in practice by storing the non-zero indices for each column in independent lists, as depicted in Figure~\ref{fig:inv_idx}.

The number of FLOPs incurred is given by,
\begin{align*}
    F(\Dv, \uv_q) & = \sum_{j \in N_q} \sum_{i:\vv_{ij} \neq 0} 1 = \sum_{i = 1}^n \sum_{j = 1}^d \Ib[(\uv_q)_j \neq 0 \wedge \vv_{ij} \neq 0]
\end{align*}
Taking the expectation on both sides w.r.t. $\xv_q, \xv^{(i)}$ and using the independence of the data, we get
\begin{equation}
    \Eb[F(\Dv, \uv_q)] = \sum_{i = 1}^n \sum_{j = 1}^d \Pb\big((\uv_q)_j \neq 0\big) \Pb\big(\vv_{ij} \neq 0\big) = n \sum_{j = 1}^d \Pb(f_\theta(X)_j \neq 0)^2
\end{equation}
where $X \sim \Pc$. Since the expected number of FLOPs scales linearly with the number of vectors in the database, a more suitable quantity is the \emph{mean-FLOPs-per-row} defined as
\begin{equation}
    \Fc(f_\theta, \Pc) = \Eb[F(\Dv, \uv_q)] / n = \sum_{j=1}^d \Pb(f_\theta(X)_j \neq 0)^2 = \sum_{j=1}^d p_j^2.
    \label{eqn:flops}
\end{equation}
Note that for a fixed amount of sparsity $\sum_{j=1}^{d} p_j = d\, p$, this is minimized when each of the dimensions are non-zero with equal probability $p_j = p,\ \forall 1 \le j \le d$, upon which $\Fc(f_\theta, \Pc) = d\, p^2$ (so that as a regularizer,  $\Fc(f_\theta, \Pc)$ will in turn encourage such a uniform distribution across dimensions). Given such a uniform distribution, compared to dense multiplication which has a complexity of $O(d)$ per row, we thus get an improvement by a factor of $1/p^2\ (p<1)$. Thus when only $p$ fraction of all the entries is non-zero, and evenly distributed across all the columns, we achieve a speedup of $1/p^2$. Note that independence of the non-zero indices is not necessary due to the linearity of expectation -- in fact, features from a neural network are rarely uncorrelated in practice.

\paragraph{FLOPs versus speedup.} While FLOPs reduction is a reasonable measure of speedup on primitive processors of limited parallelization and cache memory. FLOPs is not an accurate measure of actual speedup when it comes to mainstream commercial processors such as Intel's CPUs and Nvidia's GPUs, as the latter have cache and SIMD (Single-Instruction Multiple Data) mechanism highly optimized for dense matrix multiplication, while sparse matrix multiplication are inherently less tailored to their cache and SIMD design \citep{sohoni2019low}. On the other hand, there have been threads of research on hardwares with cache and parallelization tailored to sparse operations that show speedup proportional to the FLOPs reduction \citep{han2016eie,parashar2017scnn}. Modeling the cache and other hardware aspects can potentially lead to better performance but less generality and is left to our future works.

\begin{figure}
    \centering
    \begin{subfigure}[]{0.33\textwidth}
    \centering
    \includegraphics[width=\textwidth]{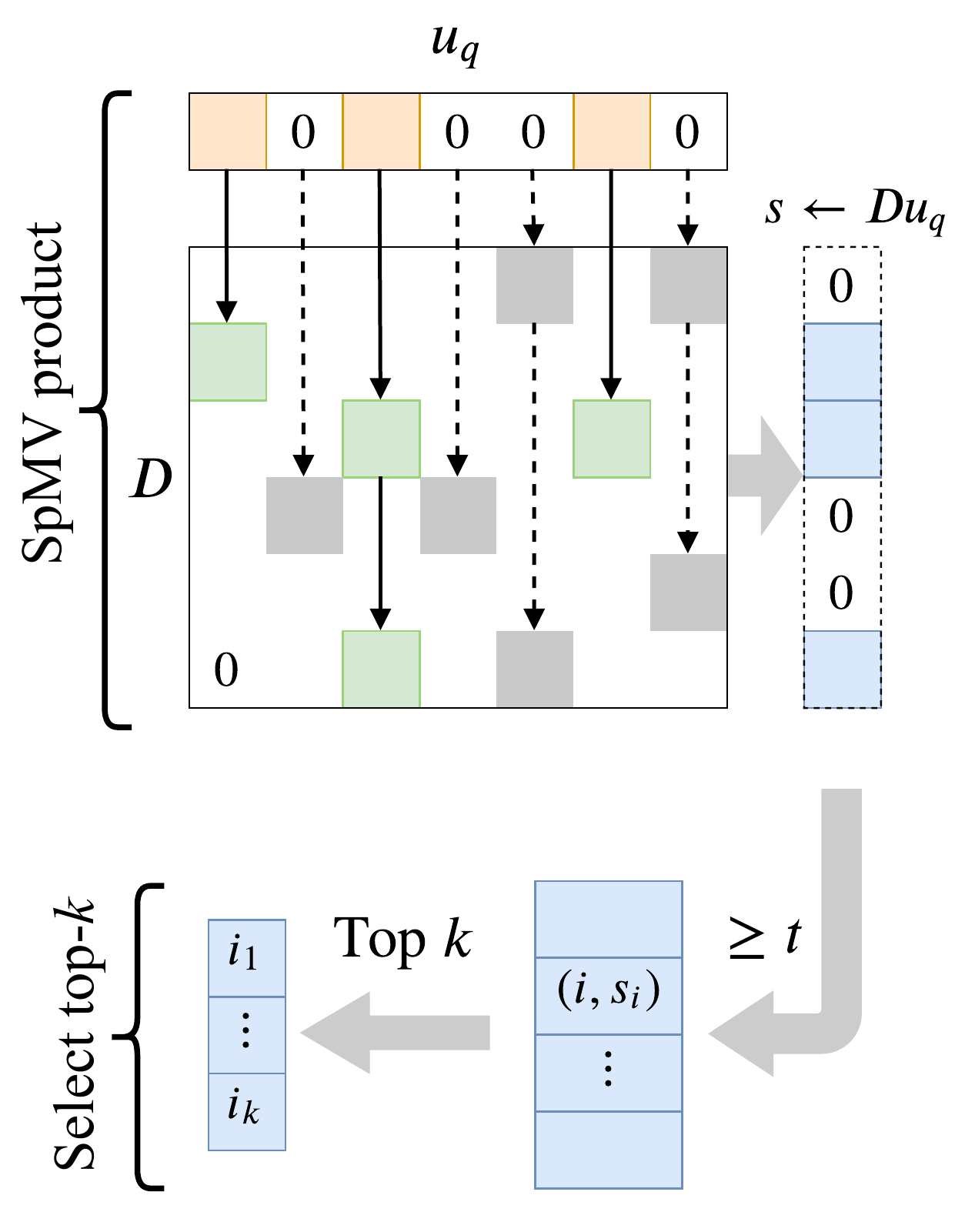}
    \end{subfigure}\hfill%
    \begin{minipage}{0.65\textwidth}
    \begin{algorithm}[H]
    \begin{algorithmic}[1]
    \State \emph{(Build Index)}
    \FBlock{Sparse matrix $\Dv$}
    \For {$j = 1 \cdots d$}
    \State Init $C[j] \gets \{(i, \Dv_{ij})\ |\ \Dv_{ij} \neq 0\ \wedge\ 1 \le i \le n\}$
    \State \Comment{Stores the non-zero values and their indices}
    \EndFor
    \EndFBlock
    \State
    \State\emph{(Query)}
    \FBlock{Sparse query $\uv_q$, threshold $t$, number of NNs $k$}
    \State Init score vector $s[i] = 0,\ 1 \le i \le n$.
    \For{$j = 1 \cdots d \text{ s.t. } \uv_q[j] \neq 0$} \Comment{SpMV product}
    \For{$(i, v) \in C[j]$}
    \State $s[i] \mathrel{+}= v \uv_q[j]$
    \EndFor
    \EndFor
    \State $S \gets \{(i, s[i])\ |\ 1\le i \le n,\ s[i] \ge t\}$ \Comment{Thresholding}
    \State $S_k \gets \{i_1, \dots, i_k\}$ top-$k$ indices $i$ of $S$ based on $s[i]$
    \State\Comment{Using \textbf{\texttt{\small nth\_select}} from C++ STL}
    \State \Return $S_k$
    \EndFBlock
    \end{algorithmic}
    \caption{Sparse Nearest Neighbour}
    \label{alg:spv-spm}
    \end{algorithm}
    \end{minipage}
    \caption{\textbf{SpMV product:} The colored cells denote non-zero entries, and the arrows indicate the list structure for each of the columns, with solid arrows denoting links that were traversed for the given query. The green and grey cells denote the non-zero entries that were accessed and not accessed, respectively. The non-zero values in $\Dv\uv_q$ (blue) can be computed using only the common non-zero values (green). \textbf{Selecting top-$k$:} The sparse product vector is then filtered using a threshold $t$, after which the top-$k$ indices are returned.}
    \label{fig:inv_idx}
\end{figure}

\section{Our Approach}
\label{sec:our_approach}
The $\ell_1$ regularization is the most common approach to induce sparsity. However, as we will also verify experimentally, it does not ensure an uniform distribution of the non-zeros in all the dimensions that is required for the optimal speed-up. Therefore, we resort to incorporating the actual FLOPs incurred, directly into the loss function which will lead to an optimal trade-off between the search time and accuracy. The FLOPs $\Fc(f_\theta, \Pc)$ being a discontinuous function of model parameters, is hard to optimize, and hence we will instead optimize using a continuous relaxation of it.

Denote by $\ell(f_\theta, \Dc)$, any metric loss on $\Dc$ for the embedding function $f_\theta$. The goal in this paper is to minimize the loss while controlling the expected FLOPs $\Fc(f_\theta, \Pc)$ defined in Eqn.~\ref{eqn:flops}. Since the distribution $\Pc$ is unknown, we use the samples to get an estimate of $\Fc(f_\theta, \Pc)$. Recall the empirical fraction of non-zero activations $\bar{p}_j = \frac{1}{n}\sum_{i=1}^n \Ib[f_\theta(\xv_i)_j \neq 0]$, which converges in probability to $p_j$. Therefore, with a slight abuse of notation define $\Fc(f_\theta, \Dc) = \sum_{j=1}^d \bar{p}^2_j$, which is a consistent estimator for $\Fc(f_\theta, \Pc)$ based on the samples $\Dc$. Note that $\Fc$ denotes either the population or empirical quantities depending on whether the functional argument is $\Pc$ or $\Dc$. We now consider the following regularized loss.
\begin{equation}
    \min_{\theta \in \Theta}\ \underbrace{\ell(f_\theta, \Dc) + \lambda \Fc(f_\theta, \Dc)}_{\Lc(\theta)}
\end{equation}
for some parameter $\lambda$ that controls the FLOPs-accuracy tradeoff. The regularized loss poses a further hurdle, as $\bar{p}_j$ and consequently $\Fc(f_\theta, \Dc)$ are not continuous due the presence of the indicator functions. We thus compute the following continuous relaxation. Define the mean absolute activation $a_j = \Eb[|f_\theta(X)_j|]$ and its empirical version $\Bar{a}_j = \frac{1}{n} \sum_{i=1}^n |f_\theta(\xv_i)_j|$, which is the $\ell_1$ norm of the activations (scaled by $1/n$) in contrast to the $\ell_0$ quasi norm in the FLOPs calculation. Define the relaxations, $\widetilde{\Fc}(f_\theta, \Pc) = \sum_{j=1}^d a_j^2$ and its consistent estimator $\widetilde{\Fc}(f_\theta, \Dc) = \sum_{j=1}^d \Bar{a}_j^2$. We propose to minimize the following relaxation, which can be optimized using any off-the-shelf stochastic gradient descent optimizer.
\begin{equation}
    \min_{\theta \in \Theta}\ \underbrace{\ell(f_\theta, \Dc) + \lambda \widetilde{\Fc}(f_\theta, \Dc)}_{\widetilde{\Lc}(\theta)}.
\end{equation}
\paragraph{Sparse retrieval and re-ranking.} During inference, the sparse vector of a query image is first obtained from the learned model and the nearest neighbour is searched in a database of sparse vectors forming a sparse matrix. An efficient algorithm to compute the dot product of the sparse query vector with the sparse matrix is presented in Figure~\ref{alg:spv-spm}. This consists of first building a list of the non-zero values and their positions in each column. As motivated in Section~\ref{sec:expected_flops}, given a sparse query vector, it is sufficient to only iterate through the non-zero values and the corresponding columns. Next, a filtering step is performed keeping only scores greater than a specified threshold. Top-$k$ candidates from the remaining items are returned. The complete algorithm is presented in Algorithm~\ref{alg:spv-spm}. In practice, the sparse retrieval step is not sufficient to ensure good performance. The top-$k$ shortlisted candidates are therefore further re-ranked using dense embeddings as done in SDH. This step involves multiplication of a small dense matrix with a dense vector. The number of shortlisted candidates $k$ is chosen such that the dense re-ranking time does not dominate the total time.

\paragraph{Comparison to SDH~\citep{jeong2018efficient}.} It is instructive to contrast our approach with that of SDH~\citep{jeong2018efficient}. In contrast to the binary hashes in SDH, our approach learns sparse real valued representations. SDH uses a min-cost-max-flow approach in one of the training steps, while we train ours only using SGD. During inference in SDH, a shortlist of candidates is first created by considering the examples in the database that have hashes with non-empty intersections with the query hash. The candidates are further re-ranked using the dense embeddings. The shortlist in our approach on the other hand is constituted of the examples with the top scores from the sparse embeddings.

\paragraph{Comparison to unrelaxed FLOPs regularizer.}
We provide an experimental comparison of our continuous relaxation based FLOPs regularizer to its unrelaxed variant, showing that the performance of the two are markedly similar. Setting up this experiment requires some analytical simplifications based on recent deep neural network analyses. We first recall recent results that indicate that the output of a batch norm layer nearly follows a Gaussian distribution \citep{santurkar2018does}, so that in our context, we could make the simplifying approximation that $f_\theta(X)_j$ (where $X \sim \Pc$) is distributed as $\rho(Y)$ where $Y \sim \Nc(\mu_j(\theta), \sigma^2_j(\theta))$, $\rho$ is the $\mathrm{ReLU}$ activation used at the neural network output. We have modelled the pre-activation as a Gaussian distribution with mean and variance depending on the model parameters $\theta$. We experimentally verify that this assumption holds by minimizing the KS distance~\citep{massey1951kolmogorov} between the CDF of $\rho(Y)$ where $Y\sim \Nc(\mu, \sigma^2)$ and the empirical CDF of the activations. The KS distance is minimized wrt. $\mu, \sigma$. Figure~\ref{fig:cdf_fit} shows the empirical CDF and the fitted CDF of $\rho(Y)$ for two different architectures.

While $\mu_j(\theta), \sigma_j(\theta)\ (1\le j \le d)$ cannot be tuned independently due to their dependence on $\theta$, in practice, the huge representational capacity of neural networks allows $\mu_j(\theta)$ and $\sigma_j(\theta)$ to be tuned almost independently. We consider a toy setting with 2-d embeddings. For a tractable analysis, we make the simplifying assumption that, for $j = 1, 2$, $f_\theta(X)_j$ is distributed as $\mathrm{ReLU}(Y)$ where $Y \sim \Nc(\mu_j, \sigma^2_j)$, thus losing the dependence on $\theta$.

We now analyze how minimizing the continuous relaxation $\widetilde{\Fc}(f_\theta, \Pc)$ compares to minimizing $\Fc(f_\theta, \Pc)$. Note that we consider the population quantities here instead of the empirical quantities, as they are more amenable to theoretical analyses due to the existence of closed form expressions. We also consider the $\ell_1$ regularizer as a baseline. We initialize with $(\mu_1, \mu_2, \sigma_1, \sigma_2) = (-1/4, -1.3, 1, 1)$, and minimize the three quantities via gradient descent with infinitesimally small learning rates. For this contrastive analysis, we have not considered the effect of the metric loss. Note that while the discontinuous empirical quantity $\Fc(f_\theta, \Dc)$ cannot be optimized via gradient descent, it is possible to do so for its population counterpart $\Fc(f_\theta, \Pc)$ since it is available in closed form as a continuous function when making Gaussian assumptions. The details of computing the gradients can be found in Appendix~\ref{app:gradient_computations}.

We start with activation probabilities $(p_1, p_2) = (0.4, 0.1)$, and plot the trajectory taken when performing gradient descent, shown in Figure~\ref{fig:trajectory}. Without the effect of the metric loss, the probabilities are expected to go to zero as observed in the plot. It can be seen that, in contrast to the $\ell_1$-regularizer, $\Fc$ and $\widetilde{\Fc}$ both tend to sparsify the less sparse activation ($p_1$) at a faster rate, which \emph{corroborates the fact that they encourage an even distribution of non-zeros}.

\begin{figure}
    \centering
    \begin{subfigure}[t]{0.65\textwidth}
    \centering
        \includegraphics[trim=0 10pt 0 7pt, clip, width=0.50\textwidth]{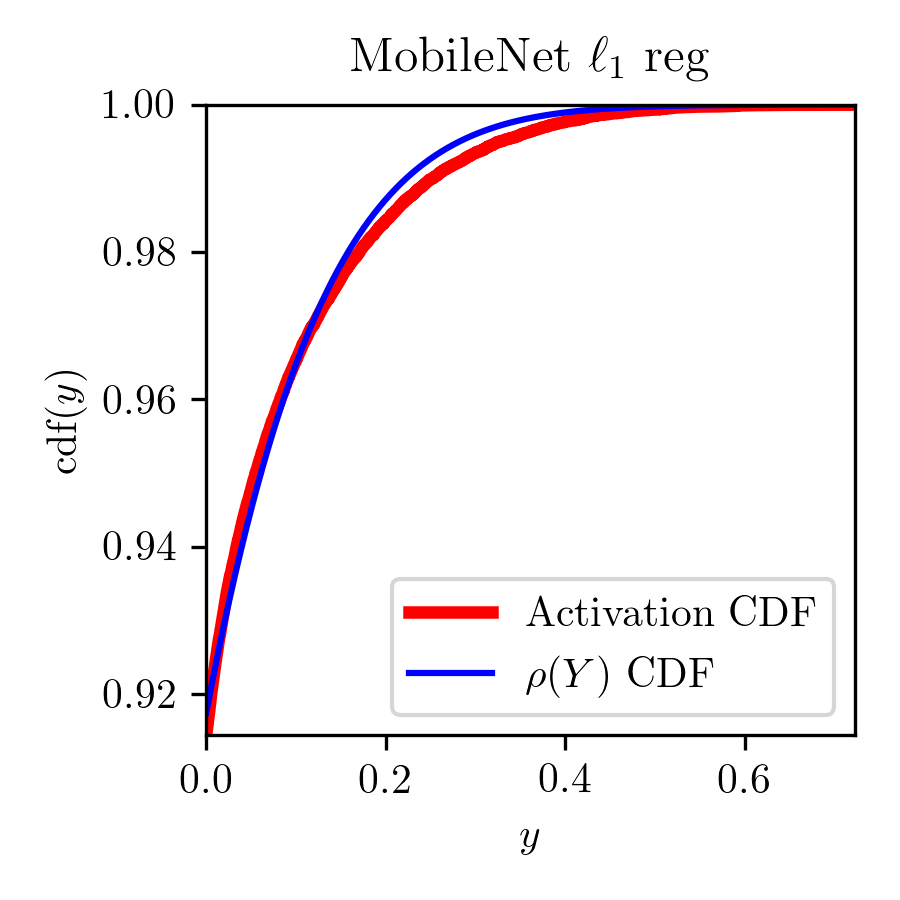}%
        \includegraphics[trim=0 10pt 0 7pt, clip, width=0.50\textwidth]{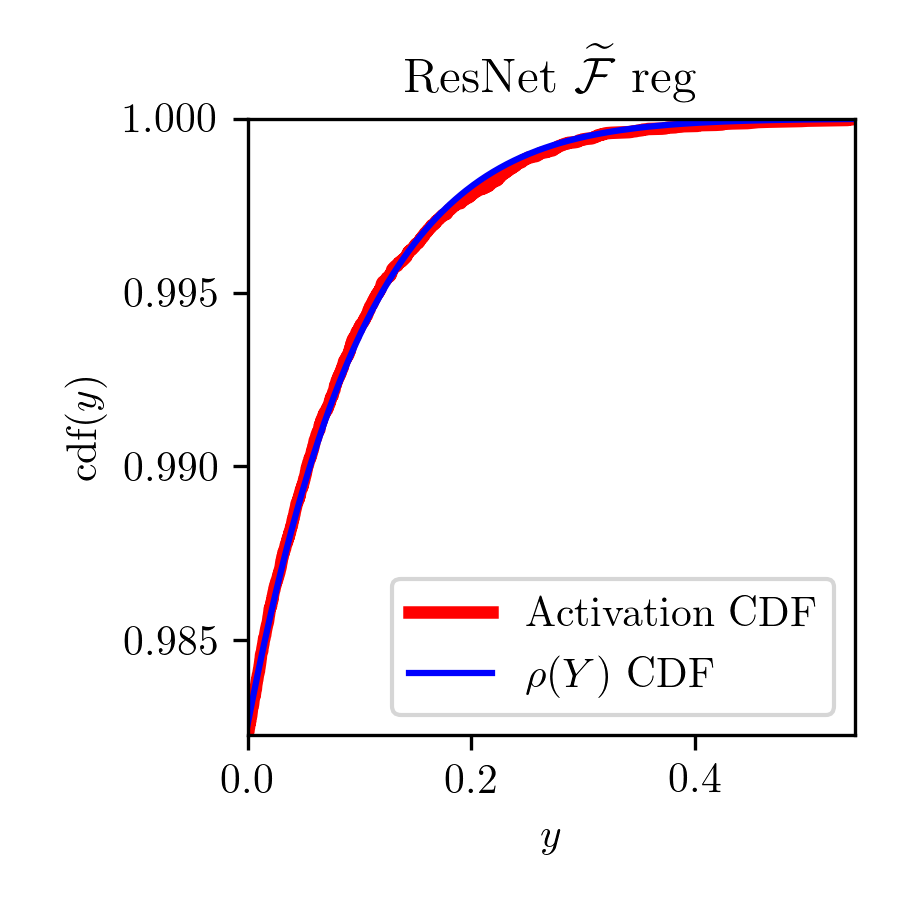}
        \subcaption{The CDF of $\rho(Y)$ fitted to minimize the KS distance to the empirical CDF of the activations for two different architectures.}
        \label{fig:cdf_fit}
    \end{subfigure}\hfill
    \begin{subfigure}[t]{0.32\textwidth}
        \centering
        \includegraphics[trim=0 12pt 0 7pt, clip, width=1.05\textwidth]{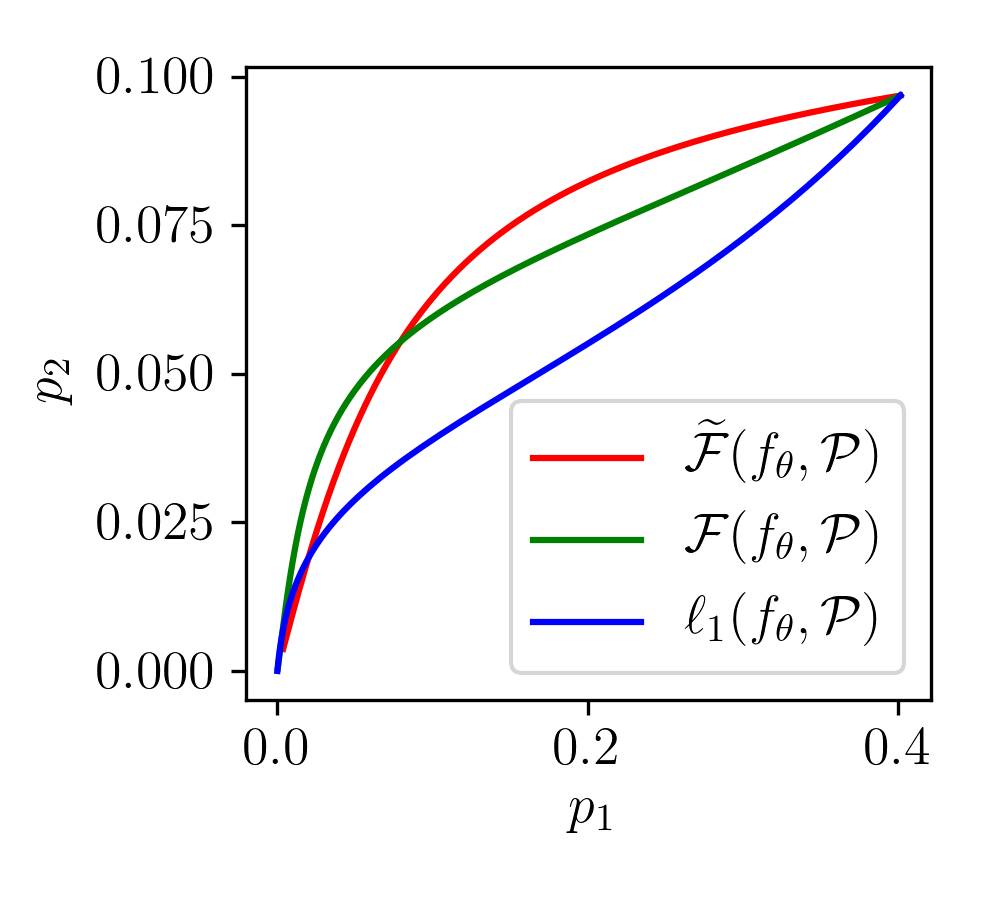}
        \subcaption{The trajectory of the activation probabilities when minimizing the respective regularizers.}
        \label{fig:trajectory}
    \end{subfigure}
    \caption{Figure (a) shows that the CDF of the activations (red) closely resembles the CDF of $\rho(Y)$ (blue) where $Y$ is a Gaussian random variable. Figure (b) shows that $\Fc$ and $\widetilde{\Fc}$ behave similarly by sparsifying the less sparser activation at a faster rate when compared to the $\ell_1$ regularizer.}
    \label{fig:analysis}
\end{figure}

\paragraph{$\widetilde{\Fc}$ promotes orthogonality.} We next show that, when the embeddings are normalized to have a unit norm, as typically done in metric learning, then minimizing $\widetilde{\Fc}(f_\theta, \Dc)$ is equivalent to promoting orthogonality on the \emph{absolute values} of the embedding vectors. Let $\|f_\theta(\xv)\|_2 = 1,\ \forall x \in \Xc$, we then have the following:
\begin{equation}
    \widetilde{\Fc}(f_\theta, \Dc) = \sum_{j=1}^d \bb{\frac{1}{n}\sum_{i=1}^n |f_\theta(\xv_i)_j|}^2 = \frac{1}{n^2}\sum_{p, q \in [1:n]} \big\langle|f_\theta(\xv_p)|, |f_\theta(\xv_q)|\big\rangle
\end{equation}
$\widetilde{\Fc}(f_\theta, \Dc)$ is minimized when the vectors $\{|f_\theta(\xv_i)|\}_{i=1}^n$ are orthogonal. Metric learning losses aim at minimizing the interclass dot product, whereas the FLOPs regularizer aims at minimizing pairwise dot products irrespective of the class, leading to a tradeoff between sparsity and accuracy. This approach of pushing the embeddings apart, bears some resemblance to the idea of \emph{spreading vectors} \citep{sablayrolles2018spreading} where an entropy based regularizer is used to uniformly distribute the embeddings on the unit sphere, albeit without considering any sparsity. Maximizing the pairwise dot product helps in reducing FLOPs as is illustrated by the following toy example. Consider a set of $d$ vectors $\{\vv_i\}_{i=1}^d \subset \Rb^d$ (here $n=d$) satisfying $\|\vv_i\|_2 = 1,\ \forall i \in [1:d]$. Then $\sum_{p, q \in [1:d]} \big\langle |\vv_p|, |\vv_q| \big\rangle$ is minimized when $\vv_p = e_p$, where $e_p$ is an one-hot vector with the $p$ th entry equal to 1 and the rest 0. The FLOPs regularizer thus tends to spread out the non-zero activations in all the dimensions, thus producing balanced embeddings. This simple example also demonstrates that when the number of classes in the training set is smaller or equal to the number of dimensions $d$, a trivial embedding that minimizes the metric loss and also achieves a small number of FLOPs is $f_\theta(\xv) = e_{y}$ where $y$ is true label for $\xv$. This is equivalent to predicting the class of the input instance. The caveat with such  embeddings is that they might not be semantically meaningful beyond the specific supervised task, and will naturally hurt performance on unseen classes, and tasks where the representation itself is of interest. In order to avoid such a collapse in our experiments, we ensure that the embedding dimension is smaller than the number of training classes. Furthermore, as recommended by \citet{sablayrolles2017should}, we perform all our evaluations on unseen classes.

\paragraph{Exclusive lasso.} Also known as $\ell_{1, 2}$-norm, in previous works it has been used to induce competition (or exclusiveness) in features in the same group. More formally, consider $d$ features indexed by $\{1, \dots, d\}$, and groups $g \subset \{1, \dots, d\}$ forming a set of groups $\Gc \subset 2^{\{1, \dots, d\}}$.\footnote{Denotes the powerset of $\{1, \dots, d\}$.} Let $\wv$ denote the weight vector for a linear classifier. The exclusive lasso regularizer is defined as,
\begin{equation*}
    \Omega_{\Gc}(\wv) = \sum_{g \in \Gc} \| \wv_g\|_1^2,
\end{equation*}
where $\wv_g$ denotes the sub-vector $(\wv_i)_{i\in g}$, corresponding to the indices in $g$. $\Gc$ can be used to induce various kinds of structural properties. For instance $\Gc$ can consist of groups of correlated features. The regularizer prevents feature redundancy by selecting only a few features from each group.

Our proposed FLOPs based regularizer has the same form as exclusive lasso. Therefore exclusive lasso applied to the batch of activations, with the groups being columns of the activation matrix (and rows corresponding to different inputs), is equivalent to the FLOPs regularizer. It can be said that, within each activation column, the FLOPs regularizer induces competition between different input examples for having a non-zero activation.

\section{Experiments}
\label{sec:experiments}
We evaluate our proposed approach on a large scale metric learning dataset: the Megaface \citep{kemelmacher2016megaface} used for face recognition. This is a much more fine grained retrieval tasks (with 85k classes for training) compared to the datasets used by \citet{jeong2018efficient}. This dataset also satisfies our requirement of the number of classes being orders of magnitude higher than the dimensions of the sparse embedding. As discussed in Section~\ref{sec:our_approach}, a few number of classes during training can lead the model to simply learn an encoding of the training classes and thus not generalize to unseen classes. Face recognition datasets avoid this situation by virtue of the huge number of training classes and a balanced distribution of examples across all the classes.

Following standard protocol for evaluation on the Megaface dataset \citep{kemelmacher2016megaface}, we train on a refined version of the MSCeleb-1M \citep{guo2016msceleb} dataset released by \citet{deng2018arcface} consisting of $1$ million images spanning $85$k classes. We evaluate with 1 million distractors from the Megaface dataset and 3.5k query images from the Facescrub dataset~\citep{ng2014data}, which were not seen during training.

\paragraph{Network architecture.} We experiment with two architectures: MobileFaceNet \citep{chen2018mobilefacenets}, and ResNet-101 \citep{he2016identity}. We use ReLU activations in the embedding layer for MobileFaceNet, and SThresh activations (defined below) for ResNet. The activations are $\ell_2$-normalized to produce an embedding on the unit sphere, and used to compute the Arcface loss \citep{deng2018arcface}. We learn 1024 dimensional sparse embeddings for the $\ell_1$ and $\widetilde{\Fc}$ regularizers; and 128, 512 dimensional dense embeddings as baselines. All models were implemented in Tensorflow~\citep{abadi2016tensorflow} with the sparse retrieval algorithm implemented in C++. The re-ranking step used 512-d dense embeddings.

\paragraph{Activation function.} In practice, having a non-linear activation at the embedding layer is crucial for sparsification. Layers with activations such as ReLU are easier to sparsify due to the bias parameter in the layer before the activation (linear or batch norm) which acts as a direct control knob to the sparsity. More specifically, $\mathrm{ReLU}(\xv - \lambdav)$ can be made more (less) sparse by increasing (decreasing) the components of $\lambdav$, where $\lambdav$ is the bias parameter of the previous linear layer. In this paper we consider two types of activations: $\mathrm{ReLU}(\xv) = \max(\xv, \zero)$, and the soft thresholding operator $\mathrm{SThresh}(\xv) = \mathrm{sgn}(\xv) \max(|\xv| - 1/2, 0)$ \citep{boyd2004convex}. ReLU activations always produce positive values, whereas soft thresholding can produce negative values as well.
% The analysis in Figure~\ref{fig:analysis} follows similarly for SThresh using the fact that $|\mathrm{SThresh}(X)| = \mathrm{ReLU}(X - 1/2) + \mathrm{ReLU}(-X-1/2)$.

\paragraph{Practical considerations.} In practice, setting a large regularization weight $\lambda$ from the beginning is harmful for training. Sparsifying too quickly using a large $\lambda$ leads to many dead activations (saturated to zero) in the embedding layer and the model getting stuck in a local minima. Therefore, we use an annealing procedure and gradually increase $\lambda$ throughout the training using a regularization weight schedule $\lambda(t): \Nb \mapsto \Rb$ that maps the training step to a real valued regularization weight. In our experiments we choose a $\lambda(t)$ that increases quadratically as $\lambda(t) = \lambda (t/T)^2$, until step $t=T$, where $T$ is the threshold step beyond which $\lambda(t) = \lambda$.

\paragraph{Baselines.} We compare our proposed $\widetilde{\Fc}$-regularizer, with multiple baselines: exhaustive search with dense embeddings, sparse embeddings using $\ell_1$ regularization, \emph{Sparse Deep Hashing (SDH)} \citep{jeong2018efficient}, and PCA, LSH, PQ applied to the 512 dimensional dense embeddings from both the architectures. We train the SDH model using the aforementioned architectures for 512 dimensional embeddings, with number of active hash bits $k=3$. We use numpy (using efficient MKL optimizations in the backend) for matrix multiplication required for exhaustive search in the dense and PCA baselines. We use the CPU version of the \emph{Faiss} \citep{faiss} library for LSH and PQ (we use the IVF-PQ index from Faiss).

Further details on the training hyperparameters and the hardware used can be found in Appendix~\ref{app:training_details}.

\subsection{Results}
We report the recall and the time-per-query for various hyperparameters of our proposed approach and the baselines, yielding trade-off curves. The reported times include the time required for re-ranking. The trade-off curves for MobileNet and ResNet are shown in Figures~\ref{fig:mobilenet_recalls} and \ref{fig:resnet_recalls} respectively. We observe that while vanilla $\ell_1$ regularization is an improvement by itself for some hyperparameter settings, the $\widetilde{\Fc}$ regularizer is a further improvement, and yields the most optimal trade-off curve. SDH has a very poor speed-accuracy trade-off, which is mainly due to the explosion in the number of shortlisted candidates with increasing number of active bits leading to an increase in the retrieval time. On the other hand, while having a small number of active bits is faster, it leads to a smaller recall. For the other baselines we notice the usual order of performance, with PQ having the best speed-up compared to LSH and PCA. While dimensionality reduction using PCA leads to some speed-up for relatively high dimensions, it quickly wanes off as the dimension is reduced even further.

\begin{figure}
    \centering
    \begin{subfigure}[c]{0.6\textwidth}
    \includegraphics[trim=0 14pt 0 12pt, clip, width=\textwidth]{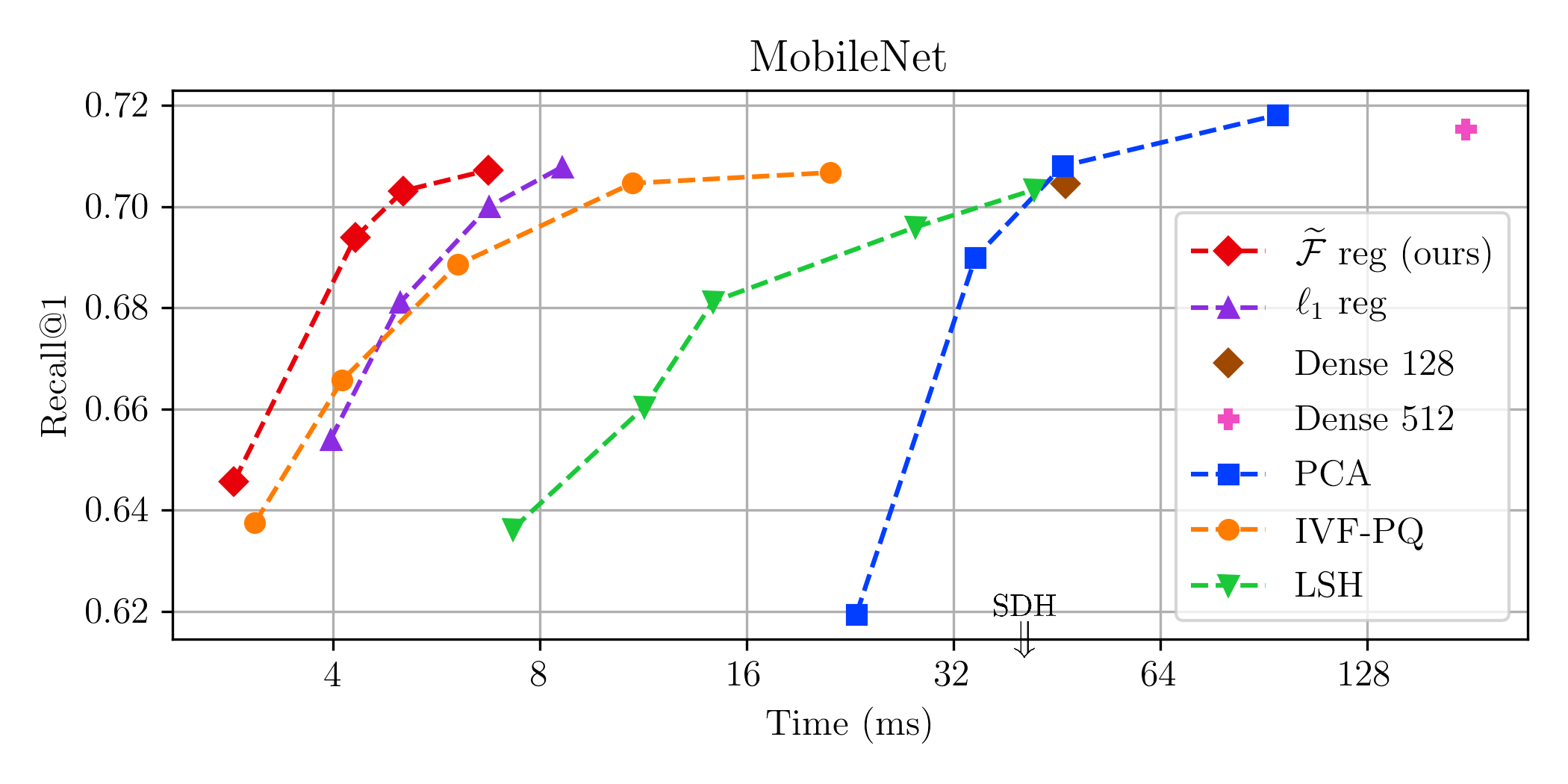}
    \subcaption{Time per query vs recall for MobileNet.}
    \label{fig:mobilenet_recalls}
    \end{subfigure}%
    \begin{subfigure}[c]{0.32\textwidth}
    \includegraphics[trim=0 10pt 0 5pt, clip, width=\textwidth]{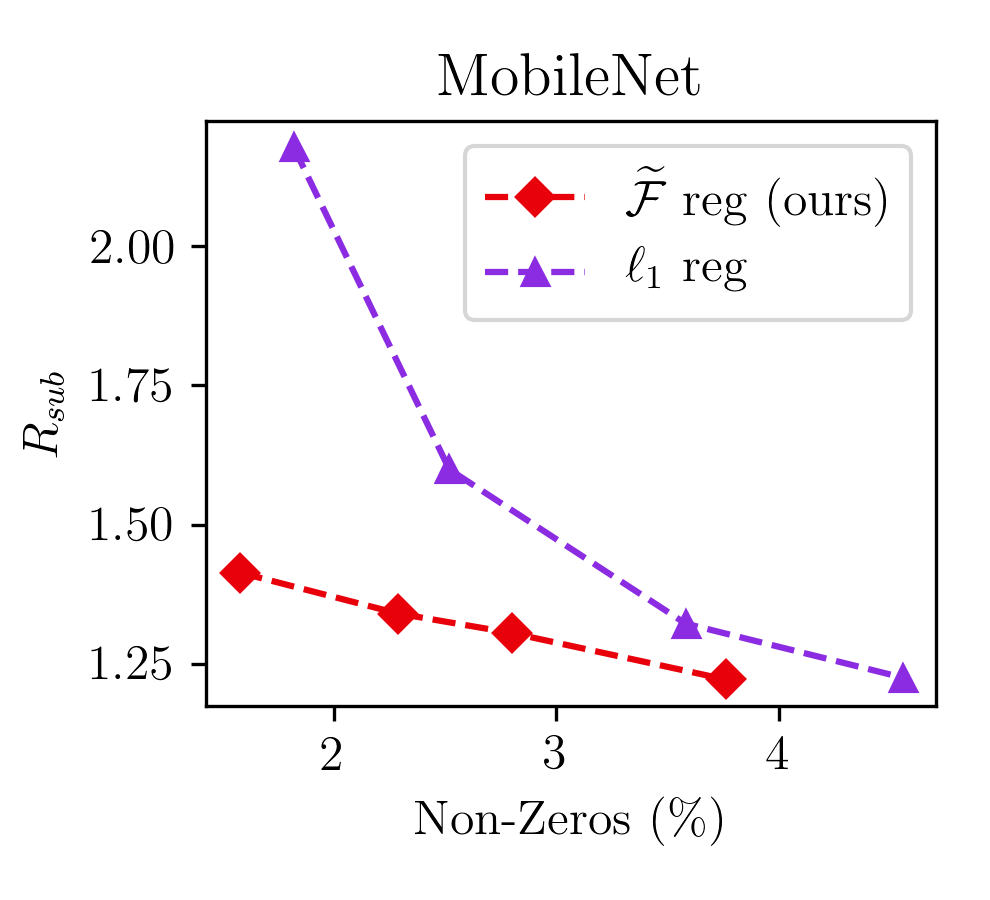}
    \caption{$R_{sub}$ vs sparsity for MobileNet.}
    \label{fig:mobilenet_sparsity}
    \end{subfigure}
    \begin{subfigure}[c]{0.6\textwidth}
    \includegraphics[trim=0 14pt 0 12pt, clip, width=\textwidth]{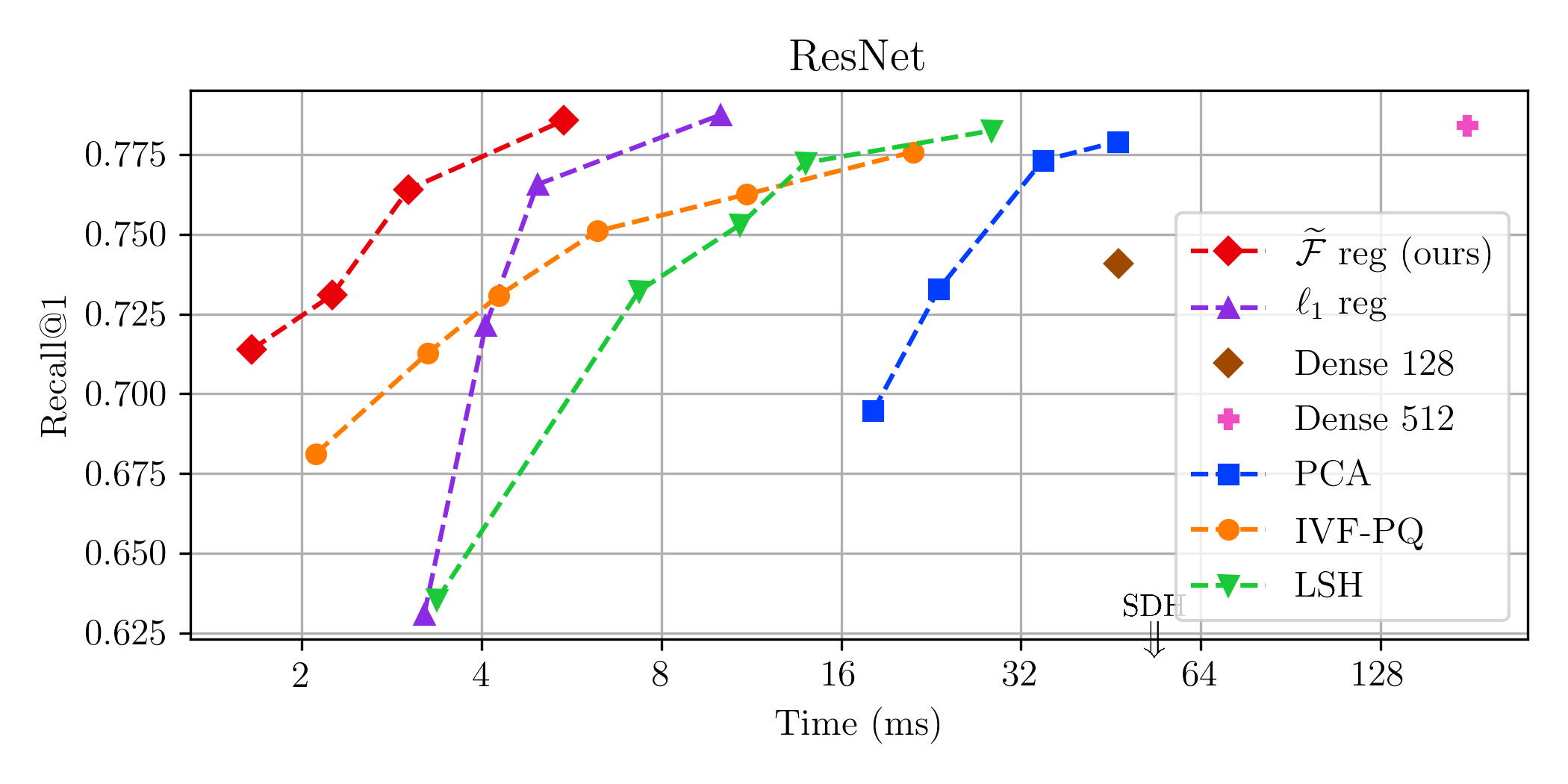}
    \subcaption{Time per query vs recall for ResNet.}
    \label{fig:resnet_recalls}
    \end{subfigure}%
    \begin{subfigure}[c]{0.32\textwidth}
    \includegraphics[trim=0 10pt 0 5pt, clip, width=\textwidth]{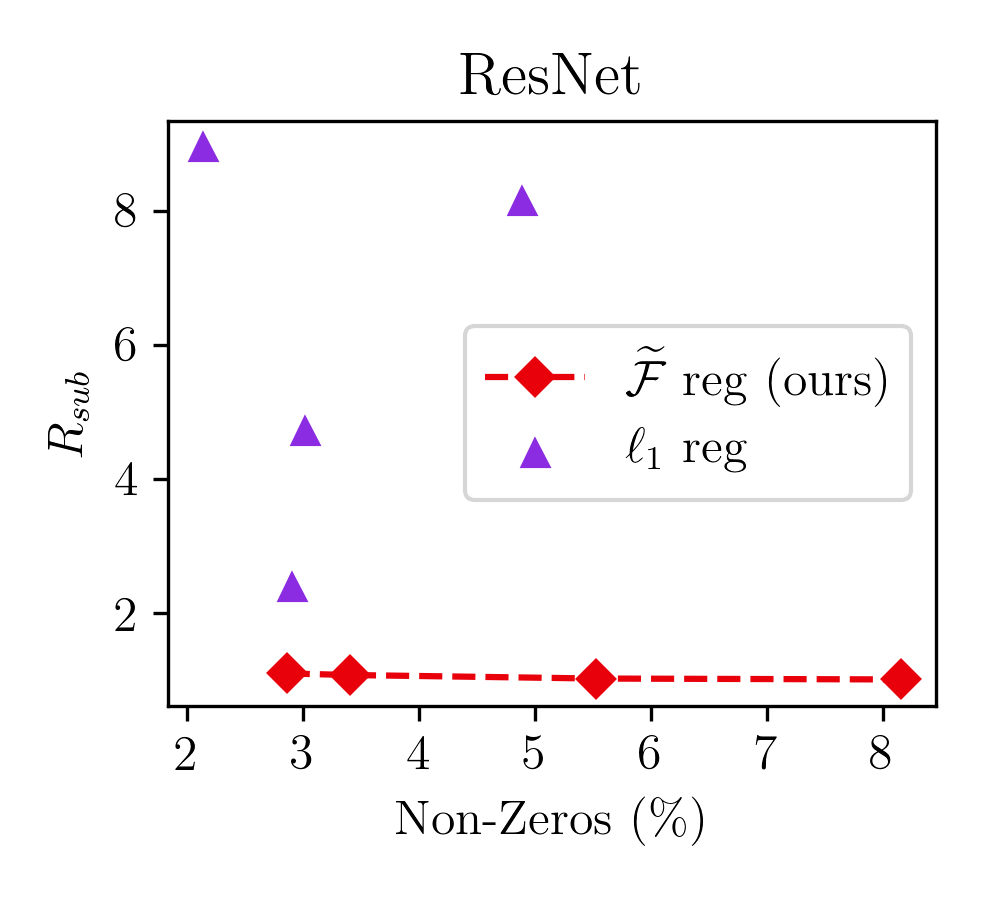}
    \caption{$R_{sub}$ vs sparsity for ResNet.}
    \label{fig:resnet_sparsity}
    \end{subfigure}
    \caption{Figures (a) and (c) show the speed vs recall trade-off for the MobileNet and ResNet architectures respectively. The trade-off curves produced by varying the hyper-parameters of the respective approaches. The points with higher recall and lower time (top-left side of the plots) are better. The SDH baseline being out of range of both the plots is indicated using an arrow. Figures (b) and (d) show the sub-optimality ratio vs sparsity plots for MobileNet and ResNet respectively. $R_{sub}$ closer to $1$ indicates that the non-zeros are uniformly distributed across the dimensions.}
\end{figure}

We also report the sub-optimality ratio $R_{sub} = \Fc(f_\theta, \Dc) / d\Bar{p}^2$ computed over the dataset $\Dc$, where $\Bar{p}=\frac{1}{d}\sum_{j=1}^d \Bar{p}_j$ is the mean activation probability estimated on the test data. Notice that $R_{sub} \ge 1$, and the optimal $R_{sub} = 1$ is achieved when $\Bar{p}_j = \Bar{p},\ \forall 1\le j\le d$, that is when the non-zeros are evenly distributed across the dimensions. The sparsity-vs-suboptimality plots for MobileNet and ResNet are shown in Figures~\ref{fig:mobilenet_recalls} and~\ref{fig:resnet_recalls} respectively. We notice that the $\widetilde{\Fc}$-regularizer yields values of $R_{sub}$ closer to $1$ when compared to the $\ell_1$-regularizer. For the MobileNet architecture we notice that the $\ell_1$ regularizer is able to achieve values of $R$ close to that of $\widetilde{\Fc}$ in the less sparser region. However, the gap increases substantially with increasing sparsity. For the ResNet architecture on the other hand the $\ell_1$ regularizer yields extremely sub-optimal embeddings in all regimes. The $\widetilde{\Fc}$ regularizer is therefore able to produce more balanced distribution of non-zeros.

The sub-optimality is also reflected in the recall values. The gap in the recall values of the $\ell_1$ and $\widetilde{\Fc}$ models is much higher when the sub-optimality gap is higher, as in the case of ResNet, while it is small when the sub-optimality gap is smaller as in the case of MobileNet. This shows the significance of having a balanced distribution of non-zeros. Additional results, including results without the re-ranking step and performance on CIFAR-100 can be found in Appendix~\ref{sec:add_results}.

\section{Conclusion}
In this paper we proposed a novel approach to learn high dimensional embeddings with the goal of improving efficiency of retrieval tasks. Our approach integrates the FLOPs incurred during retrieval into the loss function as a regularizer and optimizes it directly through a continuous relaxation. We provide further insight into our approach by showing that the proposed approach favors an even distribution of the non-zero activations across all the dimensions. We experimentally showed that our approach indeed leads to a more even distribution when compared to the $\ell_1$ regularizer. We compared our approach to a number of other baselines and showed that it has a better speed-vs-accuracy trade-off. Overall we were able to show that sparse embeddings can be around 50$\times$ faster compared to dense embeddings without a significant loss of accuracy.

\paragraph{Acknowledgements}~\\
We thank Hong-You Chen for helping in running some baselines during the early stages of this work.
This work has been partially funded by the DARPA
D3M program and the Toyota Research Institute. Toyota Research Institute ("TRI")
provided funds to assist the authors with their research but this article solely reflects the opinions and conclusions of its authors and not TRI or any other Toyota entity.
% \clearpage
\bibliography{ref}
\bibliographystyle{plainnat}

\clearpage

\appendix
\textbf{\large Appendix}

\section{Gradient computations for analytical experiments}
\label{app:gradient_computations}
As described in the main text, for purposes of an analytical toy experiment, we consider a simplified setting with 2-d embeddings with the $j$th ($j=1, 2$) activation being distributed as $(Y_j)_+ = \mathrm{ReLU}(Y_j)$ where $Y_j \sim \Nc(\mu_j, \sigma_j)$. We assume $\mu_j \le 0$, which is typical for sparse activations ($p_j \le 0.5$). Then the three compared regularizers are $\Fc(p_\theta, \Pc) = \sum_{j=1}^2 \Pb((Y_j)_+ > 0)^2$, $\widetilde{\Fc}(p_\theta, \Pc) = \sum_{j=1}^2 \Eb[(Y_j)_+]^2$, and $\ell_1(p_\theta, \Pc) = \sum_{j=1}^2 \Eb[(Y_j)_+]$. Computing the regularizer gradients thus boils down to computing the gradients of $\Pb((Y_j)_+ > 0)^2, \Eb[(Y_j)_+]^2$, and $\Eb[(Y_j)_+]$ as provided in the following lemmas. We hide the subscript $j$ for brevity, as computations are similar for all $j$.

\begin{lemma}
\label{lem:x_plus}
\begin{equation}
    \Eb[Y_+] = \frac{\sigma}{\sqrt{2\pi}} \exp\bb{-\frac{\mu^2}{2\sigma^2}} + \mu \bb{1-\Phi\bb{-\frac{\mu}{\sigma}}},
\end{equation}
and,
\begin{equation}
    \Pb(Y_+ > 0) = 1-\Phi\bb{-\frac{\mu}{\sigma}},
\end{equation}
where $\Phi$ denotes the cdf of the Gaussian distribution.
\end{lemma}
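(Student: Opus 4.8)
The plan is to evaluate both quantities by a single change of variables that reduces everything to the standard normal law. Throughout, let $\phi(z) = \frac{1}{\sqrt{2\pi}}\exp\bb{-z^2/2}$ denote the standard normal density and $\Phi$ its CDF, and recall the elementary identity $\phi'(z) = -z\,\phi(z)$, equivalently $\int z\,\phi(z)\,dz = -\phi(z)$; these are the only facts about the Gaussian I will need.

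For the activation probability, observe that $Y_+ > 0$ if and only if $Y > 0$. Since $(Y-\mu)/\sigma \sim \Nc(0,1)$, this gives $\Pb(Y_+ > 0) = \Pb\bb{\tfrac{Y-\mu}{\sigma} > -\tfrac{\mu}{\sigma}} = 1 - \Phi\bb{-\tfrac{\mu}{\sigma}}$, which is the second claim. For the first moment, I would start from $\Eb[Y_+] = \int_0^\infty \tfrac{y}{\sigma\sqrt{2\pi}}\exp\bb{-\tfrac{(y-\mu)^2}{2\sigma^2}}\,dy$ and substitute $y = \mu + \sigma z$ (so the lower limit becomes $z = -\mu/\sigma$), obtaining $\Eb[Y_+] = \int_{-\mu/\sigma}^{\infty}(\mu + \sigma z)\,\phi(z)\,dz$. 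Splitting this as $\mu\int_{-\mu/\sigma}^\infty \phi(z)\,dz + \sigma\int_{-\mu/\sigma}^\infty z\,\phi(z)\,dz$, the first piece equals $\mu\bb{1 - \Phi\bb{-\tfrac{\mu}{\sigma}}}$ by definition of $\Phi$, and the second, using the antiderivative $-\phi$ and the fact that the boundary term at $+\infty$ vanishes, equals $\sigma\bb{\phi\bb{-\tfrac{\mu}{\sigma}} - 0} = \sigma\,\phi\bb{\tfrac{\mu}{\sigma}} = \tfrac{\sigma}{\sqrt{2\pi}}\exp\bb{-\tfrac{\mu^2}{2\sigma^2}}$, where I used that $\phi$ is even. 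Adding the two pieces yields the stated formula.

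I do not expect any real obstacle here: the lemma is the standard first-moment computation for a rectified (ReLU'd) Gaussian, i.e.\ a truncated-normal identity. The only points needing a modicum of care are propagating the limits of integration correctly through the substitution and invoking $\phi' = -z\phi$ to handle the $z\,\phi(z)$ term; the sign restriction $\mu \le 0$ imposed elsewhere in this appendix is not used in deriving either identity and becomes relevant only for the subsequent gradient-descent analysis.
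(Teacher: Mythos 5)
Your proof is correct and follows essentially the same route as the paper's: a change of variables to center (the paper shifts by $\mu$; you additionally rescale by $\sigma$ to fully standardize), followed by splitting the integral into a constant piece giving $\mu\bb{1-\Phi\bb{-\mu/\sigma}}$ and a $z\,\phi(z)$ piece integrated via $\phi'(z) = -z\,\phi(z)$. Your closing remark that the restriction $\mu \le 0$ is not needed for these identities is also accurate.
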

\begin{proof}[Proof of Lemma~\ref{lem:x_plus}]
The proof is based on standard Gaussian identities.
\begin{align*}
    & \Eb[Y_+] = \int_{0}^\infty \frac{x}{\sqrt{2\pi \sigma^2}} \exp\bb{-\frac{(x-\mu)^2}{2\sigma^2}} dx= \int_{-\mu}^\infty \frac{x + \mu}{\sqrt{2\pi \sigma^2}} \exp\bb{-\frac{x^2}{2\sigma^2}} dx\\
    & = \int_{-\mu}^\infty \frac{x}{\sqrt{2\pi \sigma^2}} \exp\bb{-\frac{x^2}{2\sigma^2}} dx + \int_{-\mu}^\infty \frac{\mu}{\sqrt{2\pi \sigma^2}} \exp\bb{-\frac{x^2}{2\sigma^2}} dx\\
    & = \frac{\sigma}{\sqrt{2\pi}} \exp\bb{-\frac{\mu^2}{2\sigma^2}} + \mu \bb{1-\Phi\bb{-\frac{\mu}{\sigma}}}
\end{align*}
\begin{align*}
    \Pb(Y_j > 0) & = \int_0^\infty \frac{1}{\sqrt{2\pi \sigma^2}} \exp\bb{-\frac{(x-\mu)^2}{2\sigma^2}} dx = \int_{-\mu/\sigma}^\infty \frac{1}{\sqrt{2\pi}} \exp\bb{-\frac{x^2}{2}} dx\\
    & = 1-\Phi\bb{-\frac{\mu}{\sigma}}
\end{align*}
\end{proof}

\begin{lemma}
\label{lem:p_der}
\begin{align}
    & \parder{\Pb(Y_+ > 0)}{\mu} = -\parder{\Phi\bb{-\frac{\mu}{\sigma}}}{\mu} = \frac{1}{\sigma\sqrt{2\pi}}\exp\bb{-\frac{\mu^2}{2\sigma^2}}.\\
    & \parder{\Pb(Y_+ > 0)}{\sigma} = -\parder{\Phi\bb{-\frac{\mu}{\sigma}}}{\sigma} = -\frac{\mu}{\sigma^2\sqrt{2\pi}}\exp\bb{-\frac{\mu^2}{2\sigma^2}}.
\end{align}
\end{lemma}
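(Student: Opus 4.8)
The plan is to reduce everything to the chain rule together with the elementary identity $\Phi' = \phi$, where $\phi(z) = \frac{1}{\sqrt{2\pi}}\exp(-z^2/2)$ is the standard Gaussian density. By Lemma~\ref{lem:x_plus} we already have the closed form $\Pb(Y_+ > 0) = 1 - \Phi(-\mu/\sigma)$, so the derivatives we want are simply the partials of the composition $(\mu,\sigma) \mapsto 1 - \Phi(h(\mu,\sigma))$ with the smooth inner function $h(\mu,\sigma) = -\mu/\sigma$ (defined for $\sigma > 0$).

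First I would record the inner partial derivatives, $\parder{h}{\mu} = -\frac{1}{\sigma}$ and $\parder{h}{\sigma} = \frac{\mu}{\sigma^2}$. Applying the chain rule to $1 - \Phi(h)$ and using $\Phi'=\phi$ then gives $\parder{\Pb(Y_+ > 0)}{\mu} = -\phi\bb{-\frac{\mu}{\sigma}}\cdot\bb{-\frac{1}{\sigma}} = \frac{1}{\sigma}\phi\bb{-\frac{\mu}{\sigma}}$ and $\parder{\Pb(Y_+ > 0)}{\sigma} = -\phi\bb{-\frac{\mu}{\sigma}}\cdot\frac{\mu}{\sigma^2} = -\frac{\mu}{\sigma^2}\phi\bb{-\frac{\mu}{\sigma}}$.

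Finally I would substitute the explicit form of $\phi$, noting that $\phi$ is even so that $\phi\bb{-\frac{\mu}{\sigma}} = \frac{1}{\sqrt{2\pi}}\exp\bb{-\frac{\mu^2}{2\sigma^2}}$, which yields exactly the two displayed expressions in the lemma. There is no real obstacle here: the statement is a one-line differentiation of the CDF formula from Lemma~\ref{lem:x_plus}. The only point worth minor care is bookkeeping of the two sign flips — one coming from differentiating $1 - \Phi(\cdot)$ and one from the derivative of $-\mu/\sigma$ — so that the $\mu$-derivative ends up positive while the $\sigma$-derivative carries the factor $-\mu$; this is also consistent with the qualitative picture that increasing $\mu$ raises the activation probability and, since $\mu \le 0$ in the regime of interest, increasing $\sigma$ does too.
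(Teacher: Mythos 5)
Your proposal is correct and matches the paper's approach: the paper's proof of this lemma is simply ``follows directly from the statement by standard differentiation,'' and your chain-rule computation with $\Phi' = \phi$ and the evenness of $\phi$ is exactly that standard differentiation carried out explicitly. The sign bookkeeping and the final expressions agree with the lemma as stated.
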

\begin{proof}[Proof of Lemma~\ref{lem:p_der}]
Follows directly from the statement by standard differentiation.
\end{proof}

\begin{lemma}
\label{lem:x_plus_der}
\begin{align}
    & \parder{\Eb[Y_+]}{\mu} = 1-\Phi\bb{-\frac{\mu}{\sigma}}.\\
    & \parder{\Eb[Y_+]}{\sigma} = \frac{1}{\sqrt{2\pi}} \exp\bb{-\frac{\mu^2}{2\sigma^2}}.
\end{align}
\end{lemma}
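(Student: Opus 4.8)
The plan is to obtain both partial derivatives by directly differentiating the closed form for $\Eb[Y_+]$ established in Lemma~\ref{lem:x_plus},
\[
\Eb[Y_+] = \frac{\sigma}{\sqrt{2\pi}} \exp\!\left(-\frac{\mu^2}{2\sigma^2}\right) + \mu\left(1-\Phi\!\left(-\frac{\mu}{\sigma}\right)\right),
\]
which is smooth in $(\mu,\sigma)$ on the region $\sigma>0$ and hence may be differentiated term by term. The only ingredients needed are the chain rule, the identity $\Phi'(z) = \phi(z) := \frac{1}{\sqrt{2\pi}}e^{-z^2/2}$, and the evenness $\phi(-z)=\phi(z)$; in particular $\phi(\mu/\sigma) = \frac{1}{\sqrt{2\pi}}\exp(-\mu^2/(2\sigma^2))$ is exactly the Gaussian factor multiplying $\sigma/\sqrt{2\pi}$ in the first term, and this coincidence is what makes the final expressions collapse.

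For $\parder{\Eb[Y_+]}{\mu}$ I would first compute $\parder{}{\mu}\!\left[\frac{\sigma}{\sqrt{2\pi}}e^{-\mu^2/(2\sigma^2)}\right] = -\frac{\mu}{\sigma\sqrt{2\pi}}e^{-\mu^2/(2\sigma^2)}$, and then, using the product rule together with $\parder{}{\mu}\!\left(1-\Phi(-\mu/\sigma)\right) = \frac{1}{\sigma}\phi(\mu/\sigma)$, obtain $\parder{}{\mu}\!\left[\mu\left(1-\Phi(-\mu/\sigma)\right)\right] = \left(1-\Phi(-\mu/\sigma)\right) + \frac{\mu}{\sigma}\phi(\mu/\sigma)$. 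Summing the two, the contributions equal to $\pm\frac{\mu}{\sigma\sqrt{2\pi}}e^{-\mu^2/(2\sigma^2)}$ cancel, leaving $1-\Phi(-\mu/\sigma)$. For $\parder{\Eb[Y_+]}{\sigma}$ the same bookkeeping applies: differentiating the first term yields $\frac{1}{\sqrt{2\pi}}e^{-\mu^2/(2\sigma^2)}\left(1+\frac{\mu^2}{\sigma^2}\right)$, while $\parder{}{\sigma}\!\left[\mu\left(1-\Phi(-\mu/\sigma)\right)\right] = -\frac{\mu^2}{\sigma^2}\phi(\mu/\sigma)$, so the $\mu^2/\sigma^2$ terms cancel and only $\frac{1}{\sqrt{2\pi}}e^{-\mu^2/(2\sigma^2)}$ survives.

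There is essentially no obstacle here — it is routine differentiation — but the point deserving care is tracking the cancellations, since each derivative is markedly simpler than the derivatives of its individual summands. A cleaner conceptual route, which I would mention alongside the term-by-term computation, is to differentiate under the integral sign in $\Eb[Y_+] = \int_0^\infty \frac{x}{\sqrt{2\pi\sigma^2}}\exp\!\left(-(x-\mu)^2/(2\sigma^2)\right)dx$ (legitimate by dominated convergence, since the integrand and its $\mu$- and $\sigma$-derivatives are dominated by integrable functions locally uniformly in $(\mu,\sigma)$): differentiating in $\mu$ brings down a factor $(x-\mu)/\sigma^2$, and after the shift $x = u+\mu$ followed by a single integration by parts the resulting integral evaluates to $1-\Phi(-\mu/\sigma)$ directly, with the $\sigma$-derivative handled analogously. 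I would keep the term-by-term argument as the formal proof for brevity and cite the integral-sign version only as the explanation of why the answer is so simple.
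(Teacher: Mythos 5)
Your proposal is correct and follows essentially the same route as the paper: differentiate the closed form of $\Eb[Y_+]$ from Lemma~\ref{lem:x_plus} term by term, using the product rule and the derivative of $\Phi(-\mu/\sigma)$ (which the paper isolates as Lemma~\ref{lem:p_der} and you rederive via $\Phi'=\phi$ and the evenness of $\phi$), and observe the cancellation of the $\frac{\mu}{\sigma\sqrt{2\pi}}e^{-\mu^2/(2\sigma^2)}$ and $\frac{\mu^2}{\sigma^2\sqrt{2\pi}}e^{-\mu^2/(2\sigma^2)}$ terms. The alternative differentiation-under-the-integral argument you sketch is a nice sanity check but is not needed and is not what the paper does.
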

\begin{proof}[Proof of Lemma~\ref{lem:x_plus_der}]
\begin{align*}
    \parder{\Eb[Y_+]}{\mu} = - \frac{\mu}{\sigma\sqrt{2\pi}}\exp\bb{-\frac{\mu^2}{2\sigma^2}} + \parder{\sbb{\mu \bb{1-\Phi\bb{-\frac{\mu}{\sigma}}}}}{\mu} = 1-\Phi\bb{-\frac{\mu}{\sigma}}
\end{align*}
where the last step follows from Lemma~\ref{lem:p_der}.
\begin{align*}
    & \parder{\Eb[Y_+]}{\sigma} = \frac{1}{\sqrt{2\pi}} \exp\bb{-\frac{\mu^2}{2\sigma^2}} + \frac{\mu^2}{\sigma^2\sqrt{2\pi}} \exp\bb{-\frac{\mu^2}{2\sigma^2}} + \parder{\sbb{\mu \bb{1-\Phi\bb{-\frac{\mu}{\sigma}}}}}{\sigma}\\
    & = \frac{1}{\sqrt{2\pi}} \exp\bb{-\frac{\mu^2}{2\sigma^2}}
\end{align*}
where the last step follows from Lemma~\ref{lem:p_der}.
\end{proof}

\begin{lemma}
\label{lem:ey_sq_der}
\begin{align}
    & \parder{\Eb[Y_+]^2}{\mu} = 2\Eb[Y_+]\bb{1-\Phi\bb{-\frac{\mu}{\sigma}}}.\\
    & \parder{\Eb[Y_+]^2}{\sigma} = 2\Eb[Y_+]\frac{1}{\sqrt{2\pi}} \exp\bb{-\frac{\mu^2}{2\sigma^2}}.
\end{align}
\end{lemma}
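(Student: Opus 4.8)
\textbf{Proof proposal for Lemma~\ref{lem:ey_sq_der}.}
The plan is to obtain both partial derivatives by a single application of the chain rule to the squaring map $t \mapsto t^2$, composed with the function $(\mu, \sigma) \mapsto \Eb[Y_+]$ whose derivatives are already recorded in Lemma~\ref{lem:x_plus_der}. Concretely, writing $g(\mu,\sigma) = \Eb[Y_+]$, we have $\Eb[Y_+]^2 = g(\mu,\sigma)^2$, so that $\partial_\mu\, g^2 = 2 g\, \partial_\mu g$ and $\partial_\sigma\, g^2 = 2 g\, \partial_\sigma g$. This requires only that $g$ be differentiable in each argument, which is exactly the content of Lemma~\ref{lem:x_plus_der} (and is unproblematic since $g$ is given there by an explicit smooth expression in $\mu$ and $\sigma$ for $\sigma > 0$).

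Substituting the two derivatives from Lemma~\ref{lem:x_plus_der}, namely $\partial_\mu \Eb[Y_+] = 1 - \Phi(-\mu/\sigma)$ and $\partial_\sigma \Eb[Y_+] = \tfrac{1}{\sqrt{2\pi}}\exp(-\mu^2/(2\sigma^2))$, yields immediately
\[
    \parder{\Eb[Y_+]^2}{\mu} = 2\Eb[Y_+]\Bigl(1-\Phi\Bigl(-\tfrac{\mu}{\sigma}\Bigr)\Bigr), \qquad
    \parder{\Eb[Y_+]^2}{\sigma} = 2\Eb[Y_+]\,\tfrac{1}{\sqrt{2\pi}}\exp\Bigl(-\tfrac{\mu^2}{2\sigma^2}\Bigr),
\]
which are precisely the claimed identities. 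One may optionally leave $\Eb[Y_+]$ unexpanded (as the lemma statement does) or substitute the closed form from Lemma~\ref{lem:x_plus} if a fully explicit expression in $\mu, \sigma$ is desired for the gradient-descent simulation in the main text.

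There is essentially no obstacle here: the only thing to be careful about is to invoke the correct signs and arguments from Lemma~\ref{lem:x_plus_der}, and to note that the chain rule is legitimate because the inner function is differentiable on the relevant domain ($\sigma>0$, $\mu\le 0$). I would therefore keep the proof to two lines, citing Lemma~\ref{lem:x_plus_der} for the inner derivatives.
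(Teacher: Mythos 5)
Your proof is correct and matches the paper's own argument, which likewise derives the result directly from Lemma~\ref{lem:x_plus_der} via the chain rule applied to the square of $\Eb[Y_+]$. You have simply spelled out the one-line justification that the paper leaves implicit.
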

\begin{proof}[Proof of Lemma~\ref{lem:ey_sq_der}]
Follows directly from Lemma~\ref{lem:x_plus_der}.
\end{proof}

\begin{lemma}
\label{lem:py_sq_der}
\begin{align}
    & \parder{\Pb(Y_+ > 0)^2}{\mu} = 2\Pb(Y_+ > 0) \frac{1}{\sigma\sqrt{2\pi}}\exp\bb{-\frac{\mu^2}{2\sigma^2}}.\\
    & \parder{\Pb(Y_+ > 0)^2}{\sigma} = - 2\Pb(Y_+ > 0) \frac{\mu}{\sigma^2\sqrt{2\pi}}\exp\bb{-\frac{\mu^2}{2\sigma^2}}.
\end{align}
\end{lemma}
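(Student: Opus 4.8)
The plan is to observe that Lemma~\ref{lem:py_sq_der} is an immediate consequence of Lemma~\ref{lem:p_der} together with the chain rule applied to the squaring map. Write $g(\mu,\sigma) = \Pb(Y_+ > 0)$, so that the quantity of interest is $g^2$. Since $g$ is differentiable in both arguments (being $1-\Phi(-\mu/\sigma)$ by Lemma~\ref{lem:x_plus}, with $\Phi$ smooth and $\sigma > 0$), we have $\parder{g^2}{\mu} = 2g\,\parder{g}{\mu}$ and $\parder{g^2}{\sigma} = 2g\,\parder{g}{\sigma}$.

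First I would invoke Lemma~\ref{lem:p_der}, which supplies the two inner partial derivatives:
\begin{align*}
    \parder{\Pb(Y_+ > 0)}{\mu} = \frac{1}{\sigma\sqrt{2\pi}}\exp\bb{-\frac{\mu^2}{2\sigma^2}}, \qquad
    \parder{\Pb(Y_+ > 0)}{\sigma} = -\frac{\mu}{\sigma^2\sqrt{2\pi}}\exp\bb{-\frac{\mu^2}{2\sigma^2}}.
\end{align*}
Then I would substitute these into the chain-rule expressions above, pulling the factor $2\Pb(Y_+ > 0)$ out front in each case, which yields exactly the two displayed identities in the statement.

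There is essentially no obstacle here: the only nontrivial content — differentiating $1-\Phi(-\mu/\sigma)$ through the Gaussian CDF — has already been carried out in Lemma~\ref{lem:p_der}, so this lemma is a one-line corollary. The only point worth a sentence of care is justifying the interchange implicit in the chain rule, i.e.\ that $g$ is continuously differentiable on the relevant domain ($\sigma > 0$), which is immediate since $\Phi \in C^\infty(\Rb)$ and $(\mu,\sigma)\mapsto -\mu/\sigma$ is smooth there.
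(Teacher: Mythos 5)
Your proposal is correct and follows exactly the paper's own route: the paper's proof of Lemma~\ref{lem:py_sq_der} is simply ``Follows directly from Lemma~\ref{lem:p_der}'', which is precisely your chain-rule application of $\parder{g^2}{\cdot} = 2g\,\parder{g}{\cdot}$ with the inner derivatives supplied by Lemma~\ref{lem:p_der}. Your added remark on smoothness of $\Phi$ for $\sigma>0$ is a harmless extra justification that the paper leaves implicit.
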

\begin{proof}[Proof of Lemma~\ref{lem:py_sq_der}]
Follows directly from Lemma~\ref{lem:p_der}.
\end{proof}

\section{Experimental details}
\label{app:training_details}

All images were resized to size $112\times 112$ and aligned using a pre-trained aligner\footnote{\url{https://github.com/deepinsight/insightface}}. For the Arcloss function, we used the recommended parameters of margin $m=0.5$ and temperature $s=64$. We trained our models on 4 NVIDIA Tesla V-100 GPUs using SGD with a learning rate of $0.001$, momentum of $0.9$. Both the architectures were trained for a total of 230k steps, with the learning rate being decayed by a factor of $10$ after 170k steps. We use a batch size of 256 and 64 per GPU for MobileFaceNet for ResNet respectively.

Pre-training in SDH is performed in the same way as described above. The hash learning step is trained on a single GPU with a learning rate of $0.001$. The ResNet model is trained for 200k steps with a batch size of 64, and the MobileFaceNet model is trained for 150k steps with a batch size of 256. We set the number of active bits $k=3$ and a pairwise cost of $p=0.1$.

\paragraph{Hyper-parameters for MobileNet models.}
\begin{enumerate}
    \item The regularization parameter $\lambda$ for the $\widetilde{\Fc}$ regularizer was varied as 200, 300, 400, 600.
    \item The regularization parameter $\lambda$ for the $\ell_1$ regularizer was varied as 1.5, 2.0, 2.7, 3.5.
    \item The PCA dimension is varied as 64, 96, 128, 256.
    \item The number of LSH bits were varied as 512, 768, 1024, 2048, 3072.
    \item For IVF-PQ from the faiss library, the following parameters were fixed: \texttt{nlist=4096, M=64, nbit=8}, and \texttt{nprobe} was varied as 100, 150, 250, 500, 1000.
\end{enumerate}

\paragraph{Hyper-parameters for ResNet baselines.}
\begin{enumerate}
    \item The regularization parameter $\lambda$ for the $\widetilde{\Fc}$ regularizer was varied as 50, 100, 200, 630.
    \item The regularization parameter $\lambda$ for the $\ell_1$ regularizer was varied as 2.0, 3.0, 5.0, 6.0.
    \item The PCA dimension is varied as 48, 64, 96, 128.
    \item The number of LSH bits were varied as 256, 512, 768, 1024, 2048.
    \item For IVF-PQ, the following parameters were the same as in MobileNet: \texttt{nlist=4096, M=64, nbit=8}. \texttt{nprobe} was varied as 50, 100, 150, 250, 500, 1000.
\end{enumerate}

\paragraph{Selecting top-$k$.} We use the following heuristic to create the shortlist of candidates after the sparse ranking step. We first shortlist all candidates with a score greater than some confidence threshold. For our experiments we set the confidence threshold to be equal to 0.25. If the size of this shortlist is larger than $k$, it is further shrunk by consider the top $k$ scorers. For all our experiments we set $k = 1000$. This heuristic avoids sorting the whole array, which can be a bottleneck in this case. The parameters are chosen such that the time required for the re-ranking step does not dominate the total retrieval time.

\paragraph{Hardware.}
\begin{enumerate}
    \item All models were trained on 4 NVIDIA Tesla V-100 GPUs with 16G of memory.
    \item System Memory: 256G.
    \item CPU: Intel(R) Xeon(R) CPU E5-2686 v4 @ 2.30GHz.
    \item Number of threads: 32.
    \item Cache: L1d cache 32K, L1i cache 32K, L2 cache 256K, L3 cache 46080K.
\end{enumerate}
All timing experiments were performed on a single thread in isolation.

\section{Additional Results}
\label{sec:add_results}

\subsection{Results without re-ranking}
Figure \ref{fig:rr_vs_nrr} shows the comparison of the approaches with and without re-ranking. We notice that there is a significant dip in the performance without re-ranking with the gap being smaller for ResNet with FLOPs regularization. We also notice that the FLOPs regularizers has a better trade-off curve for the no re-ranking setting as well.
\begin{figure}
    \centering
    \includegraphics[width=0.7\textwidth]{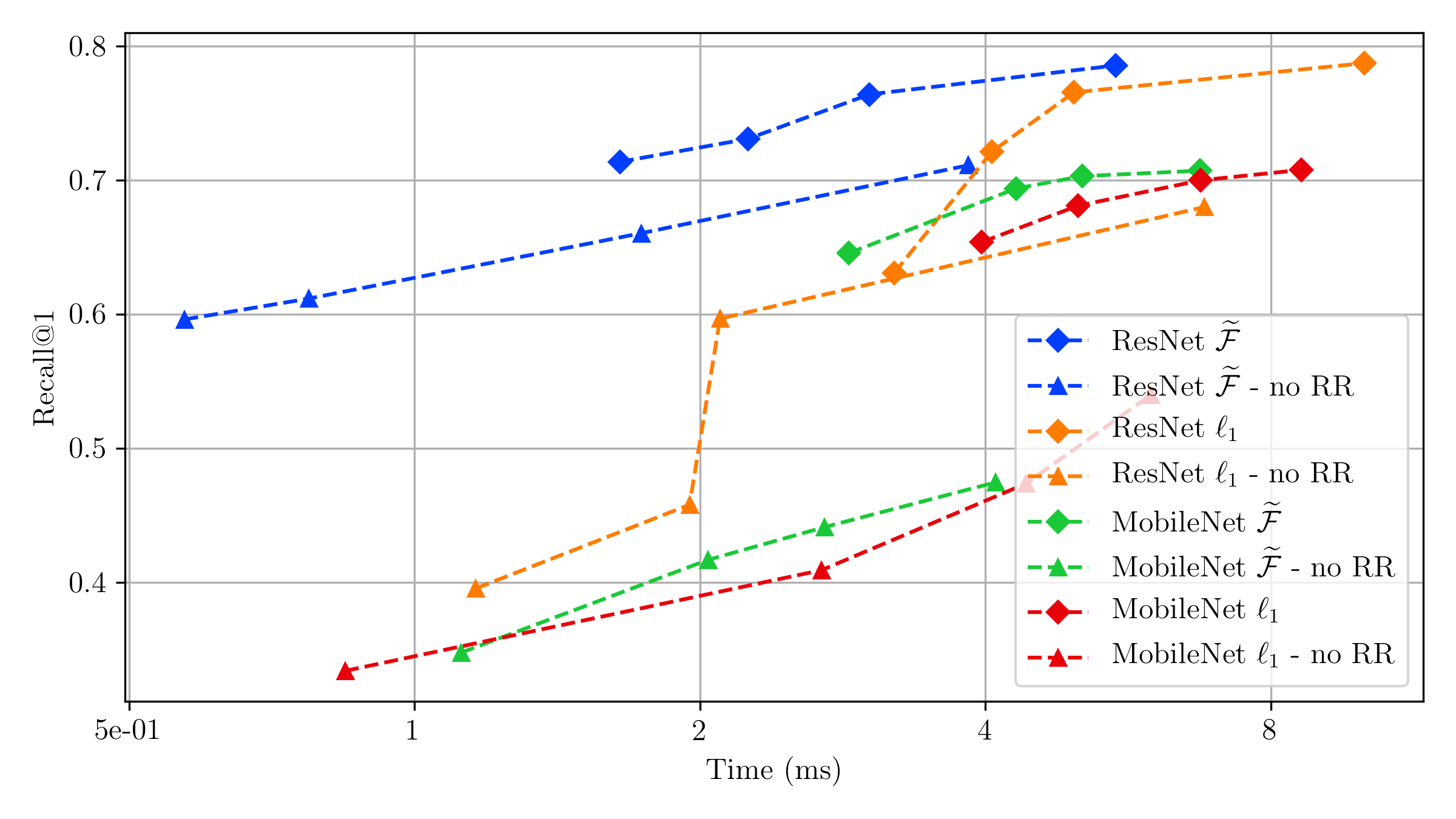}
    \caption{Time vs Recall@1 plots for retrieval with and without re-ranking. Results from the same model and regularizer have same colors. Diamonds ($\Diamond$) denote results with re-ranking, and triangles ($\triangle$) denote results without re-ranking.}
    \label{fig:rr_vs_nrr}
\end{figure}

\subsection{FPR and TPR curves}
In the main text we have reported the recall@1 which is a standard face \emph{recognition} metric. This however is not sufficient to ensure good face \emph{verification} performance. The goal in face verification is to predict whether two faces are similar or dissimilar. A natural metric in such a scenario is the FPR-TPR curve. Standard face verification datasets include LFW \citep{huang2008labeled} and AgeDB \citep{moschoglou2017agedb}. We produce embeddings using our trained models, and use them to compute similarity scores (dot product) for pairs of images. The similarity scores are used to compute the FPR-TPR curves which are shown in Figure~\ref{fig:fpr_tpr}. We notice that for curves with similar probability of activation $p$, the FLOPs regularizer performs better compared to $\ell_1$. This demonstrates the efficient utilization of all the dimensions in the case of the FLOPs regularizer that helps in learning richer representations for the same sparsity.

We also observe that the gap between sparse and dense models is smaller for ResNet, thus suggesting that the ResNet model learns better representations due to increased model capacity. Lastly, we also note that the gap between the dense and sparse models is smaller for LFW compared to AgeDB, thus corroborating the general consensus that LFW is a relatively easier dataset.

\begin{figure}
    \centering
    \includegraphics[width=0.5\textwidth]{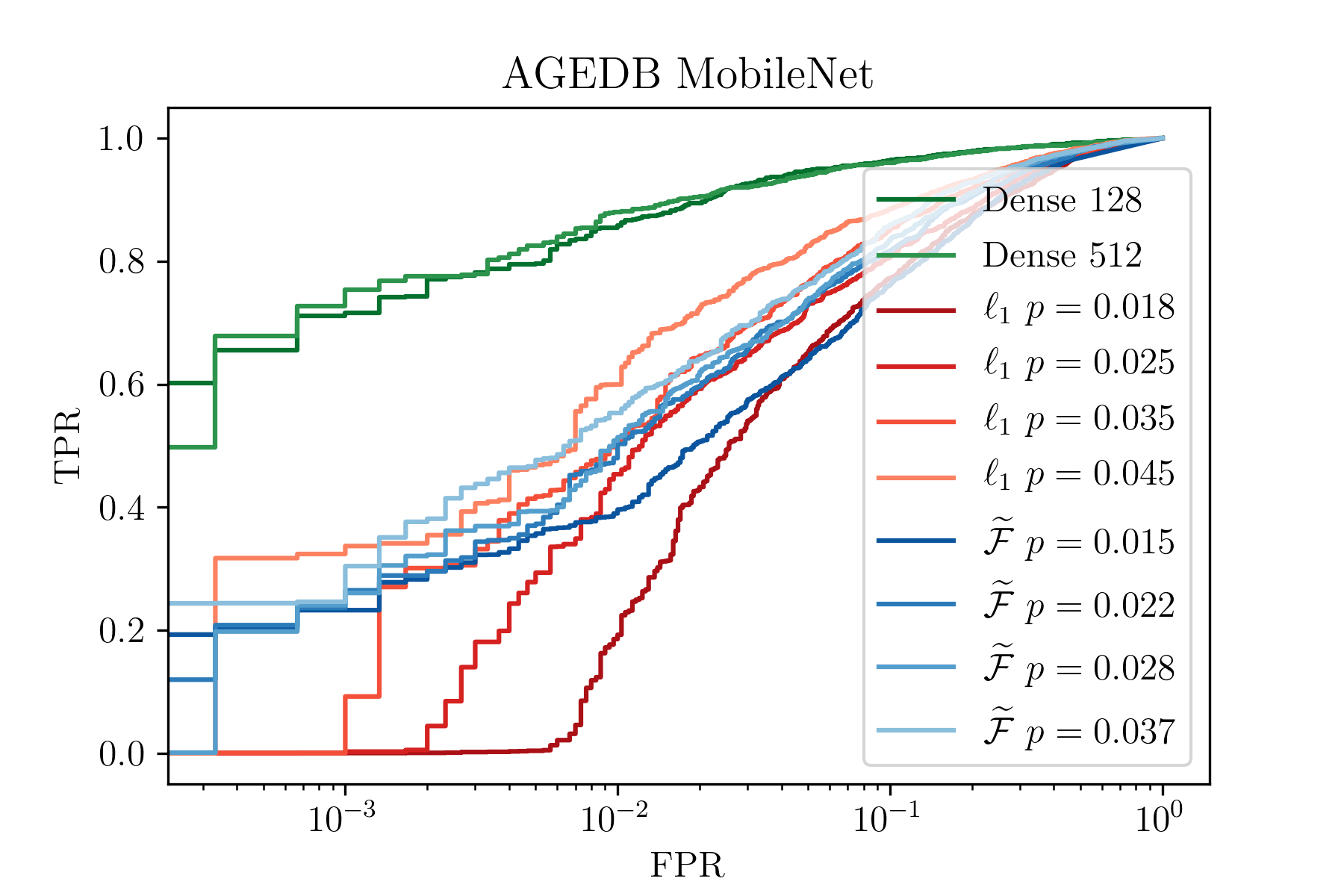}%
    \includegraphics[width=0.5\textwidth]{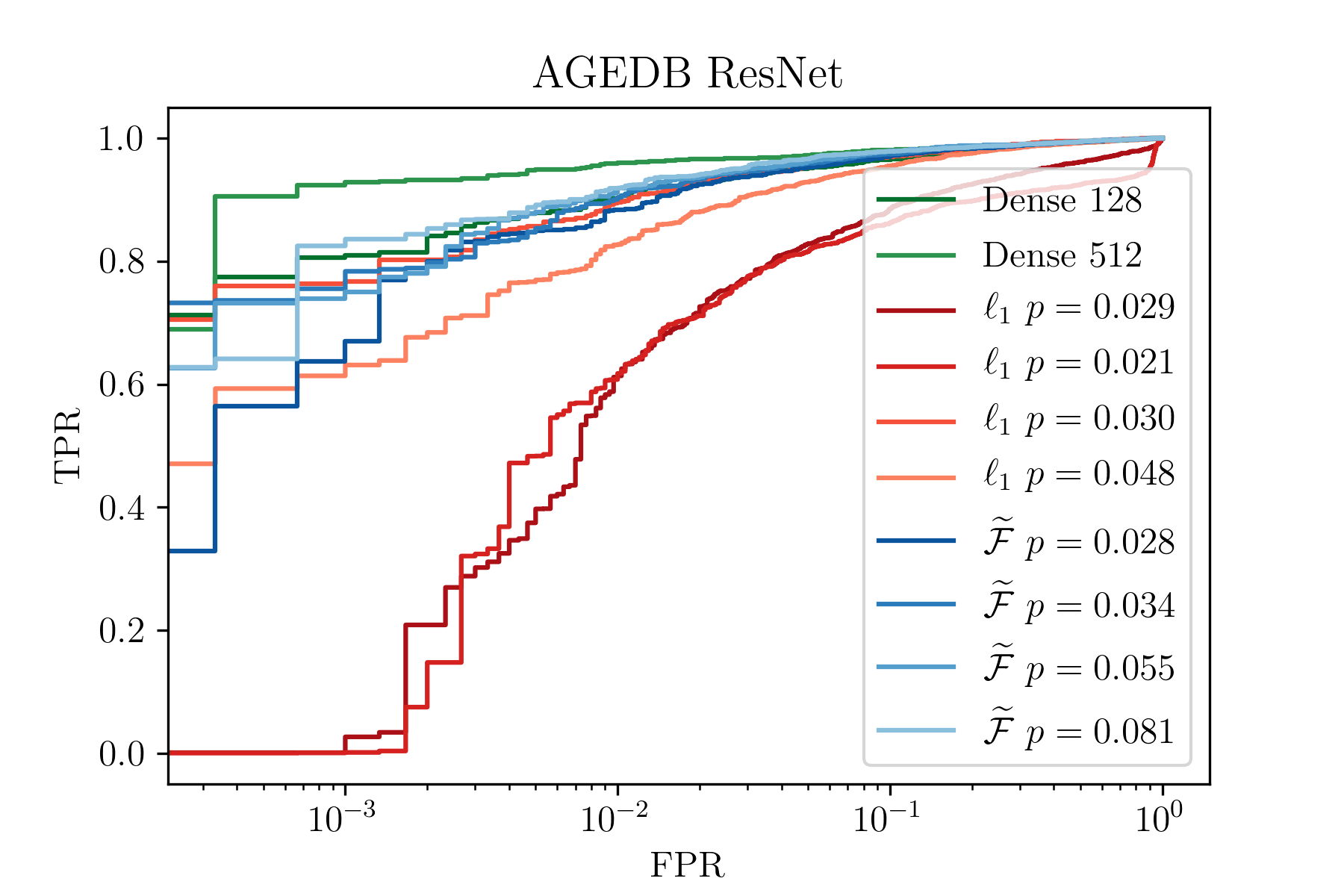}
    \includegraphics[width=0.5\textwidth]{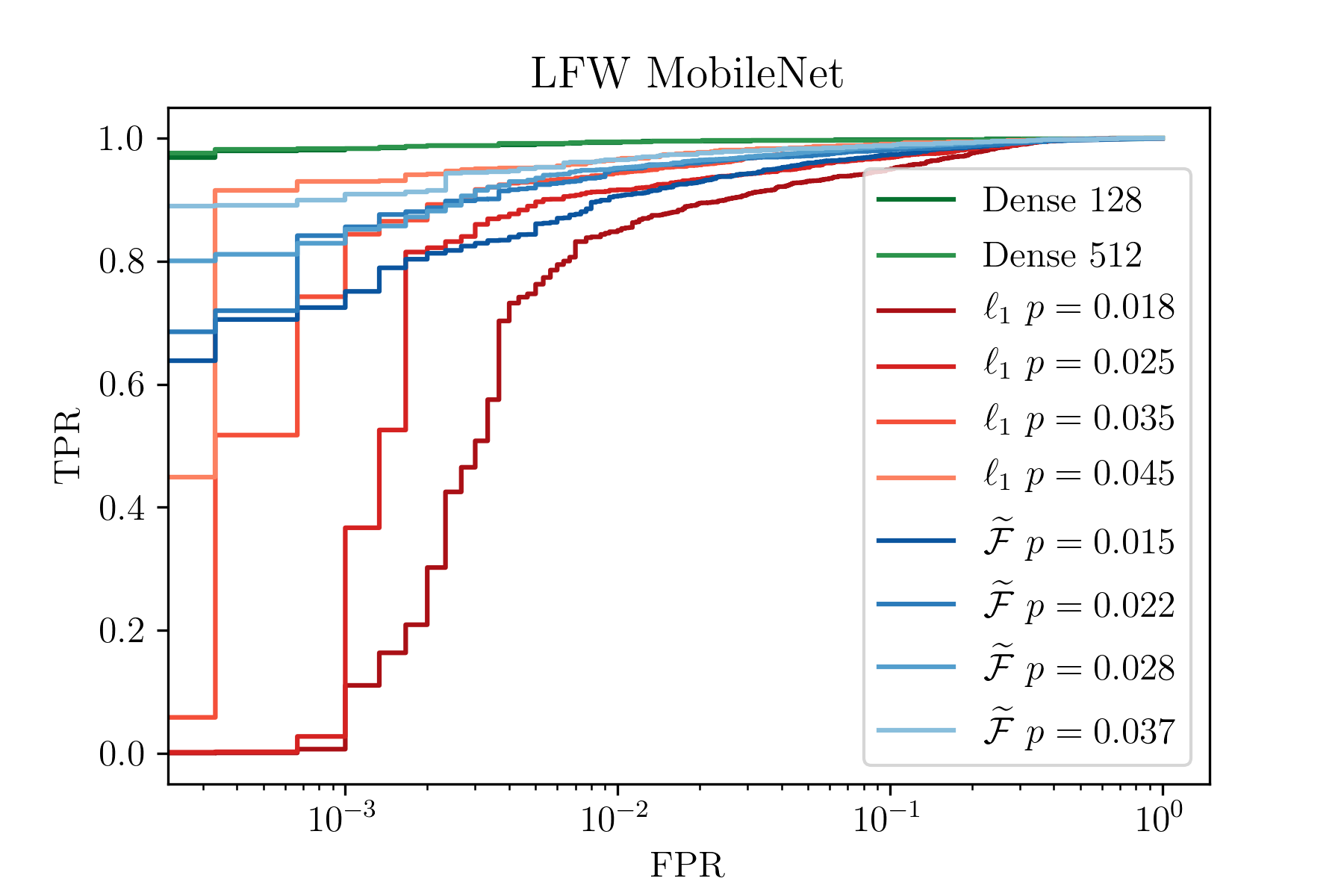}%
    \includegraphics[width=0.5\textwidth]{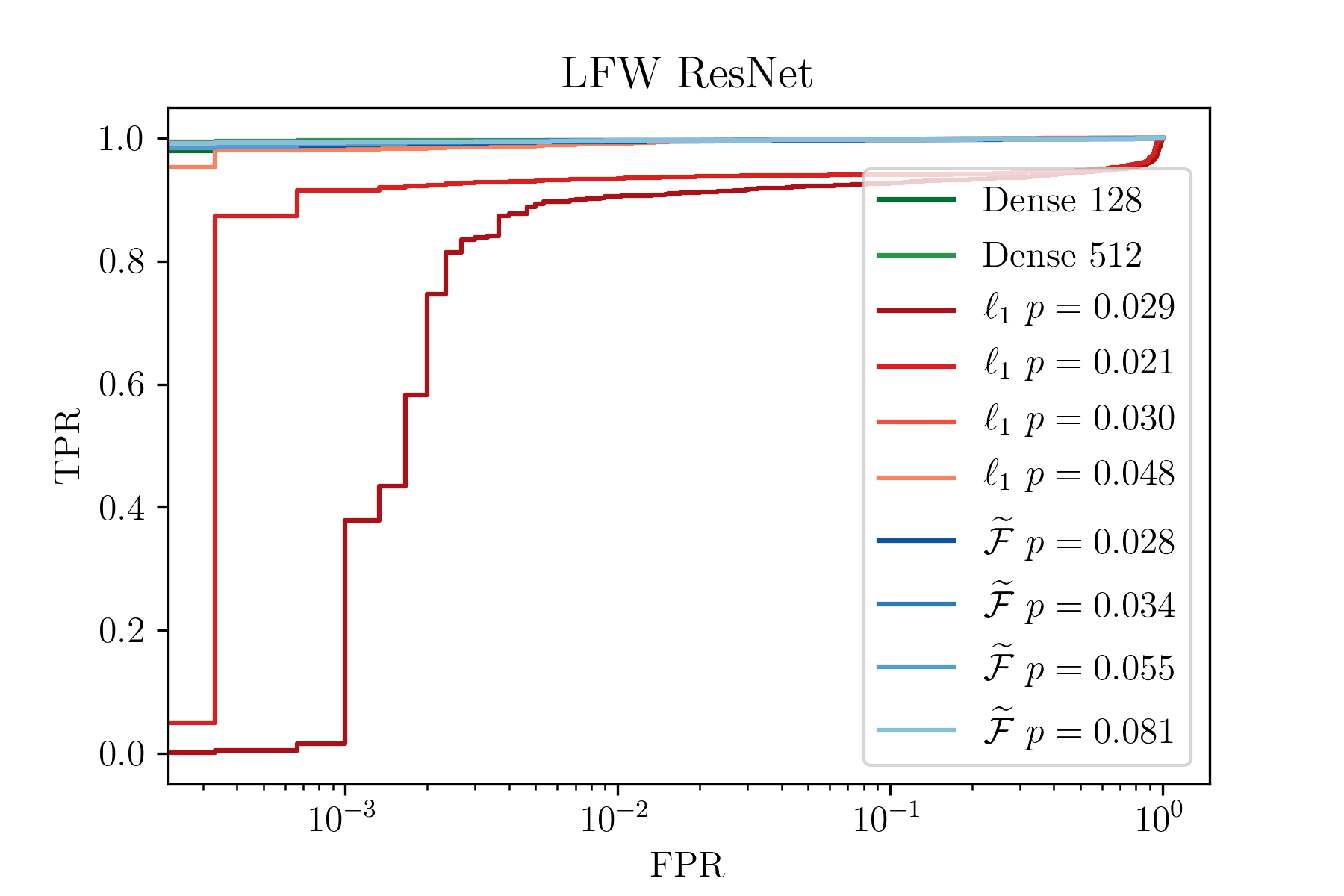}%
    \caption{FPR-TPR curves. The $\ell_1$ curves are all shown in shades of red, where as the FLOPs curves are all shown in shades of blue. The probability of activation is provided in the legend for comparison. For curves with similar probability of activation $p$, the FLOPs regularizer performs better compared to $\ell_1$, thus demonstrating that the FLOPs regularizer learns richer representations for the same sparsity.}
    \label{fig:fpr_tpr}
\end{figure}

\subsection{Cifar-100 results}
We also experimented with the Cifar-100 dataset \citep{krizhevsky2009cifar} consisting of 60000 examples and 100 classes. Each class consists of 500 train and 100 test examples. We compare the $\ell_1$ and FLOPs regularized approaches with the sparse deep hashing approach. All models were trained using the triplet loss \citep{schroff2015facenet} and embedding dim $d=64$. For the dense and DH baselines, no activation was used on the embeddings. For the $\ell_1$ and FLOPs regularized models we used the SThresh activation. Similar to \citet{jeong2018efficient}, the train-test and test-test precision values have been reported in Table~\ref{tab:cifar100}. Furthermore, the reported results are without re-ranking. Cifar-100 being a small dataset, we only report the FLOPs-per-row, as time measurements can be misleading. In our experiments, we achieved slightly higher precisions for the dense model compared to \citep{jeong2018efficient}. We notice that our models use less than $50\%$ of the computation compared to SDH, albeit with a slightly lower precision.

\begin{table}
    \centering
    \begin{tabular}{c|c|cc|cc}
        \toprule
               &   & \multicolumn{2}{c|}{Train} & \multicolumn{2}{c}{Test}\\
         Model & \emph{F} & prec$@4$ & prec$@16$ & prec$@4$ & prec$@16$ \\\midrule
         Dense & 64 & 61.53 & 61.26 & \textbf{57.31} & \textbf{56.95}\\
         SDH $k=1$ & \textbf{1.18} & \textbf{62.29} & \textbf{61.94} & 57.22 & 55.87\\
         SDH $k=2$ & 3.95 & 60.93 & 60.15 & 55.98 & 54.42\\
         SDH $k=3$ & 8.82 & 60.80 & 59.96 & 55.81 & 54.10\\\midrule
         $\widetilde{F}$ no re-ranking & \textbf{0.40} & \textbf{61.05} & \textbf{61.08} & \textbf{55.23} & \textbf{55.21}\\
         $\ell_1$ no re-ranking & 0.47 & 60.50 & 60.17 & 54.32 & 54.96
    \end{tabular}
    \caption{Cifar-100 results using triplet loss and embedding size $d = 64$. For $\ell_1$ and $\widetilde{\Fc}$ models, no re-ranking was used. \emph{F} is used to denote the FLOPs-per-row (lower is better). The SDH results have been reported from the original paper.}
    \label{tab:cifar100}
\end{table}

\end{document}